\documentclass{article}

\usepackage[preprint]{neurips_2026}

\usepackage[utf8]{inputenc} 
\usepackage[T1]{fontenc}    
\usepackage{hyperref}       
\usepackage{url}            
\usepackage{booktabs}       
\usepackage{amsfonts}       
\usepackage{nicefrac}       
\usepackage{microtype}      
\usepackage{xcolor}         

\RequirePackage{fancyhdr}
\RequirePackage{algorithm}
\RequirePackage{algorithmic}
\RequirePackage{eso-pic} 
\RequirePackage{forloop}
\usepackage{natbib}
\usepackage{amssymb}
\usepackage{mathtools}
\usepackage{amsthm}
\theoremstyle{plain}
\newtheorem{theorem}{Theorem}[section]
\newtheorem{proposition}[theorem]{Proposition}
\newtheorem{lemma}[theorem]{Lemma}
\newtheorem{corollary}[theorem]{Corollary}
\theoremstyle{definition}

\theoremstyle{remark}

\usepackage{cleveref}
\crefname{equation}{Eq.}{Eqs.}
\crefname{table}{Table}{Tables}
\crefname{figure}{Figure}{Figures}
\crefname{section}{Section}{Sections}
\crefname{algorithm}{Algorithm}{Algorithms}
\crefname{proposition}{Proposition}{Propositions}
\crefname{corollary}{Corollary}{Corollaries}
\crefname{lemma}{Lemma}{Lemmas}
\crefname{definition}{Definition}{Definitions}
\crefname{assumption}{Assumption}{Assumptions}
\usepackage{xcolor,colortbl}
\usepackage{adjustbox}
\usepackage{url}
\usepackage{multirow,multicol,xspace}
\usepackage{float}
\usepackage{graphics}
\usepackage{paralist}
\usepackage{wrapfig}
\usepackage[normalem]{ulem}
\usepackage{multirow}
\usepackage{adjustbox}
\usepackage{threeparttable}
\usepackage[font=small,skip=6pt]{caption}

\newcommand{\MODEL}{CoMole\xspace}
\newcommand{\MODELSFT}{\ensuremath{\textnormal{CoMole}_{\text{w/o RL}}}}
\newcommand{\BEST}[1]{\textbf{\textcolor{purple}{#1}}}

\title{Controllable Molecular Generative Foundation Models}

\author{%
    Yihan Zhu\\
    University of Notre Dame\\
    \texttt{yzhu25@nd.edu}\\
    \And 
    Yuhan Liu\\
    University of Notre Dame\\
    \texttt{yliu57@nd.edu}\\
    \And 
    Weijiang Li\\
    University of Notre Dame\\
    \texttt{wli27@nd.edu}\\
    \And 
    Tengfei Luo\\
    University of Notre Dame\\
    \texttt{tluo@nd.edu}\\
    \And 
    Meng Jiang\\
    University of Notre Dame\\
    \texttt{mjiang2@nd.edu}\\
}

\begin{document}

\maketitle

\begin{abstract}
Despite the success of foundation models in language and vision, molecular graph generation still lacks a unified framework for heterogeneous design tasks with reliable controllability. 
While reinforcement learning (RL) offers a natural post-training mechanism for task-specific optimization, applying it to graph generative models is hindered by the vast atom-wise action spaces and chemically invalid intermediate states.
We propose \textbf{Co}ntrollable \textbf{Mole}cular Generative Foundation Models (\MODEL), built with a unified motif-aware graph diffusion pipeline. 
By learning a motif-aware graph space, \MODEL transfers pretrained structural priors into controllable generation, where RL optimizes conditional reverse policies over chemically meaningful decisions.
We theoretically characterize the bottleneck of atom-level RL and justify motif-aware policy optimization.
Across three heterogeneous benchmarks spanning materials and drug discovery, \MODEL ranks first in controllability on all nine targets, reduces MAE by up to 48.2\% relative to the strongest baselines, and maintains validity above 0.94 without rule-based correction or post-hoc filtering. 
We further show that \MODEL transfers controllability to unseen properties by optimizing only task embeddings with the generator frozen, achieving performance competitive with strong task-specific baselines.
\end{abstract}

\section{Introduction}
\label{sec:introduction}
Molecular inverse design, generating structures with desired functional properties, is a central challenge in scientific discovery, with applications spanning biomedicine and materials science.
Recent graph diffusion methods have significantly advanced molecular generation~\citep{vignac2023digress,huang2023cdgs,liu2024graphdit}. However, these approaches remain largely task-specific and offer limited controllability over target properties~\citep{qin2025defog,liu2025demodiff}.
In language and vision, foundation-model paradigms have enabled powerful generative systems through large-scale pretraining (PT), supervised fine-tuning (SFT), and reinforcement learning (RL)-based alignment~\citep{bommasani2022opportunitiesrisksfoundationmodels,zhang2023buildinggeneralfoundationmodels}. 
For molecular inverse design, the abundance of unlabeled chemical data alongside label-scarce downstream tasks motivates the development of unified molecular generative foundation models to bridge the controllability gap between current methods and practical inverse-design needs.

However, instantiating this paradigm with atom-level graph diffusion exposes a core bottleneck: beyond the trajectory collapse observed in atom-level diffusion RL~\citep{dulac2015deeprl,liu2024gdpo}, the vast space of low-level atom-wise actions is poorly aligned with the chemically meaningful substructures through which chemists typically formulate structure-property rationales for molecular design.
As illustrated in \cref{fig:main}, atom-level RL often fails to construct reliable substructures from local edits before it can associate structural changes with property rewards.
This fragility arises because each action must jointly coordinate atom types, bonds, and valence constraints. A single invalid choice can push the trajectory off the feasible chemical manifold and make later denoising difficult to recover.
For example, the atom-level variant of our design collapses rule-free validity from 0.95 to 0.07 and worsens gas-permeability control MAE by 2.68$\times$ (\cref{tab:rq2_motif_rl}).

To overcome this bottleneck, we reparameterize graph-diffusion RL with a motif-aware decision space.
It preserves local structural flexibility while introducing rings, functional groups, and data-driven motifs as higher-level decisions.
Since molecular properties depend on substructures and their complex interactions, this space lifts RL from fragile atom-wise construction paths to chemically meaningful decisions (e.g., attaching a benzene ring), allowing rewards on generated structures to be credited more directly to property-relevant actions.
This abstraction also promotes chemical validity by preserving internally coherent substructures and reducing the risk of trajectory corruption from invalid atom-wise edits. Beyond using motifs mainly as generation priors~\citep{jin2018junction,kong2022psvae}, our design uses them to stabilize RL optimization toward controllable generation.

We introduce \textbf{\MODEL}, to our knowledge the first family of \textbf{Co}ntrollable \textbf{Mole}cular Generative Foundation Models for heterogeneous inverse design.
The central idea is to transfer pretrained structural knowledge into controllable inverse design by optimizing conditional reverse policies via RL over chemically meaningful decisions.
Concretely, \MODEL learns a Node Pair Encoding (NPE)-based tokenizer~\citep{liu2025demodiff} that preserves singleton atoms and attachment-level information while merging frequent adjacent units from the pretraining distribution into motif-aware graph states.
Over this space, the graph diffusion transformer is trained through three stages: PT learns transferable structural priors, SFT introduces conditioning for multiple properties, and RL aligns the conditional reverse diffusion policy with terminal target-property rewards.
In the main text, we instantiate RL alignment with proximal policy optimization (PPO), while alternative policy-optimization objectives are discussed in Appendix~\ref{app:alternative_rl_objective}.

We theoretically characterize the structural bottleneck of atom-level RL and motivate motif-aware policy optimization.
Empirically, we evaluate \MODEL on three materials and drug-discovery benchmarks spanning numerical and categorical conditions. 
Across all nine targets, \MODEL ranks first in controllability:
on both polymer benchmarks, it reduces MAE by over 44\% relative to the best baseline.
On small-molecule tasks, it reduces FreeSolv MAE by 13.1\% and achieves 1.0 accuracy on BACE classification.
These gains are achieved while maintaining validity above 0.94 without rule-based correction or post-hoc filtering.
We further show that \MODEL transfers controllability to unseen property targets by learning only task embeddings while keeping the generator frozen, achieving performance competitive with baselines trained directly on those targets. Together, these results suggest that our design learns transferable structure-property knowledge that supports controllable generation across heterogeneous inverse-design tasks.

\begin{figure}[tbp]
  \centering
  \includegraphics[width=0.95\linewidth]{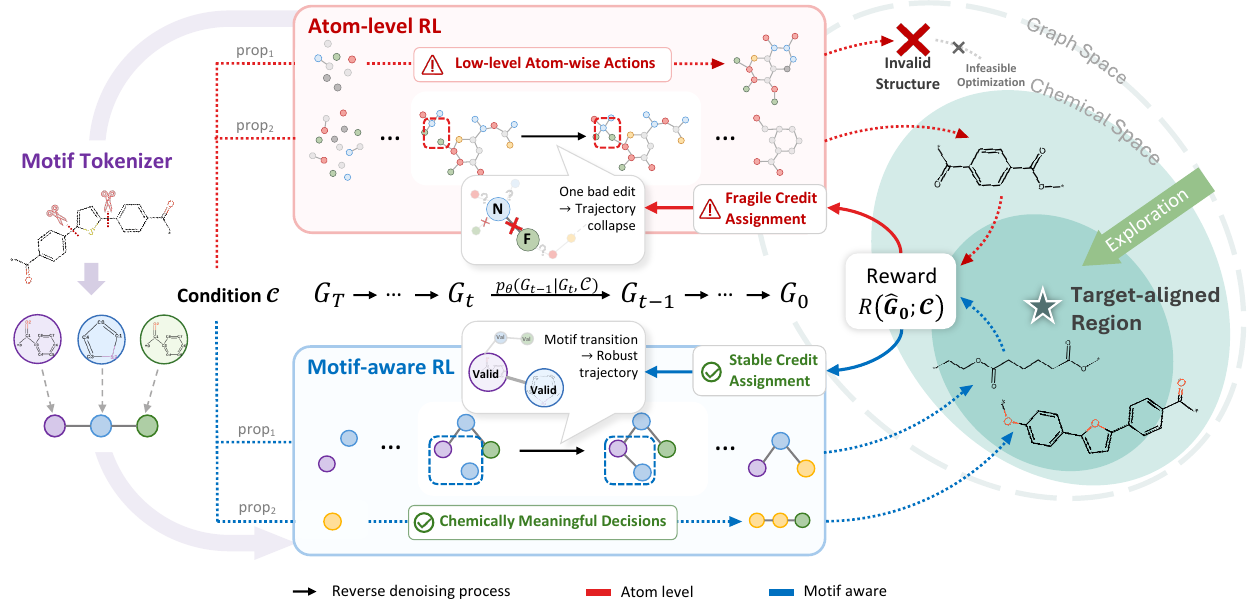}
    \caption{\textbf{Motif-aware RL as a key stage in training controllable molecular generative foundation models.}
    Atom-level RL over vast, low-level graph edits suffers trajectory collapse and fragile credit assignment,  whereas motif-aware RL credits terminal rewards to chemically meaningful decisions, stabilizing policy updates.}
  \label{fig:main}
\end{figure}

\section{Preliminaries}
\label{sec:preliminaries}
\paragraph{Notation.}
\label{sec:prelim_notation}
Let \(x\in\mathcal M_{\mathrm{val}}\) be a chemically valid molecular graph and \(\phi\) a learned graph tokenizer.
The tokenizer maps \(x\) to a motif graph \(z_0=\phi(x)\in\mathcal Z_0\), which serves as the motif state for diffusion generation and RL post-training.
For conditional generation, \(c=(k,y^\star)\in\mathcal C\) denotes the target condition, with task identity \(k\) and target value or label \(y^\star\).
We use \(\mathsf D\) to denote the empirical training distribution.

\subsection{Motif Graph Tokenization}
\label{sec:prelim_tokenization}
We follow the Node Pair Encoding (NPE) algorithm introduced by DemoDiff~\citep{liu2025demodiff} and learn a vocabulary from the pretraining dataset (Appendix~\ref{app:motif_tokenizer}).
The nodes of the resulting motif graph are atom-disjoint vocabulary units, including singleton atoms, preserved ring units, and data-driven merged substructures as higher-level motifs.
The graph also stores inter-motif bond labels and directional attachment-position labels for lossless reconstruction.
Since motif graphs have variable sizes, we pad them to a fixed number of motif slots and represent each state as
$
z=(X,E,P,m),
$
where \(m_i=1\) indicates that slot \(i\) is active.
The variable \(X_i\) encodes the motif type at slot \(i\).
For active motif pairs, \(E_{ij}=E_{ji}\) denotes a symmetric categorical bond label, including a no-bond category.
\(P_{ij}\) denotes the directional attachment-position label on motif \(i\) to motif \(j\). 
Directed pairs without an attachment use a null attachment-position label.
\(m\) is sampled once from a size prior and fixed throughout generation (see Appendix~\ref{app:mask_handling}).
When no confusion arises, we omit \(m\) and write \(z=(X,E,P)\).

\subsection{Molecular Design with Graph Diffusion Transformers}
\label{sec:prelim_diffusion}

Given a motif graph \(z_0=(X_0,E_0,P_0)\), we perform discrete diffusion over motif states \(z_t=(X_t,E_t,P_t)\). 
The forward noising process is
$
q(z_{1:T}\mid z_0)=\prod_{t=1}^{T} q(z_t\mid z_{t-1}),
$
which progressively corrupts motif types, bond types, and attachment-position labels.
Let \(q_t(z_t\mid z_0)\) denote the induced marginal at step \(t\). 
The reverse process starts from the prior \(p(z_T)\) and is parameterized by a graph diffusion transformer:
$
p_\theta(z_{0:T}\mid c)
=
p(z_T)\prod_{t=1}^{T}\widetilde p_\theta(z_{t-1}\mid z_t,t,c).
$
For unconditional pretraining, \(c\) is omitted.
At each reverse step, the denoiser predicts the distributions
$
(\hat X_0,\hat E_0,\hat P_0)=f_\theta(z_t,t,c),
$
which parameterize \(\widetilde p_\theta(z_{t-1}\mid z_t,t,c)\) by first predicting the motif state \(z_0=(X_0,E_0,P_0)\).
The model is trained with a masked denoising objective:
\(\mathcal L_{\mathrm{diff}}(\theta)
=
\mathbb E_{(z_0,c)\sim \mathsf D,\ t\sim \mathrm{Unif}([T]),\ z_t\sim q_t(\cdot\mid z_0)}
\left[
\lambda_X\mathrm{CE}_X+\lambda_E\mathrm{CE}_E+\lambda_P\mathrm{CE}_P
\right]\).
Here \(\mathrm{CE}_X\), \(\mathrm{CE}_E\), and \(\mathrm{CE}_P\) are masked cross-entropy losses over active motif nodes, inter-motif edges, and directed motif pairs. Details are given in Appendix~\ref{app:masked_ce}.

\section{Controllable Molecular Generative Foundation Modeling}
\label{sec:method}
Following \cref{sec:prelim_notation}, \MODEL generates molecules by reversing a task-conditioned motif-graph diffusion process. 
\cref{sec:method_policy} formulates PPO over the reverse trajectories, and \cref{sec:theory} analyzes why our motif-aware design improves over atom-level RL.

\subsection{Learning the Reverse Diffusion Process as a Policy}
\label{sec:method_policy}
Given a condition \(c\), reverse diffusion generates a graph through denoising \(z_T\) into \(z_0\). 
Since intermediate noisy graphs are difficult to evaluate reliably with property oracles, we apply rewards only on the terminal graph.
We therefore formulate reverse diffusion as a finite-horizon terminal-reward Markov decision process (MDP) and optimize the denoising policy over sampled trajectories.

\paragraph{MDP Formulation.}
We treat the reverse process as an MDP with horizon \(H=T\). 
Given a target condition \(c\), at MDP step \(h=0,\dots,T-1\), the state \(s_h=(z_{T-h},T-h,c)\) contains the current noisy motif graph, timestep, and condition, and the action is the next reverse state \(a_h=z_{T-h-1}\). 
During RL training, each rollout draws a condition \(c\sim\mu_{\mathrm{RL}}\) and samples \(z_T\sim p(z_T)\), starting from \(s_0=(z_T,T,c)\) and ending at \(s_T=(z_0,0,c)\).
After the policy samples \(a_h\), the next state is deterministically updated to \(s_{h+1}=(a_h,T-h-1,c)\), so the only stochastic decision is the one-step reverse kernel:
\begin{equation}
\label{eq:main_policy}
\pi_\theta(a_h\mid s_h)
=
\widetilde p_\theta(z_{T-h-1}\mid z_{T-h},T-h,c).
\end{equation}
Although \(a_h\) is a structured motif-graph action, 
\(\pi_\theta(a_h\mid s_h)\) factorizes over motif, bond, and attachment-position variables, 
allowing exact log-probability computation for RL.
The explicit factorization is given in Appendix~\ref{app:factorized_logprob}.

\paragraph{Terminal Molecular Reward.}
Let \(x_{\mathrm{gen}}=\mathrm{Dec}(z_0)\) be the molecule decoded from the final reverse state.
For task \(k\), let \(\hat o_k(x)\) be the oracle output and define the target discrepancy
\(d_c(x):=\ell_k(\hat o_k(x),y^\star)\) for valid molecules.
Here \(\ell_k\) is task-specific, e.g., absolute error for regression or absolute probability-label difference for binary classification.

We use a terminal reward that combines validity and target satisfaction:
\begin{equation}
\label{eq:reward_total_main}
R(z_0;c)
=
w_{\mathrm{val}}\,r_{\mathrm{val}}(x_{\mathrm{gen}})
+
(1-w_{\mathrm{val}})\,r_{\mathrm{prop}}(x_{\mathrm{gen}};c),
\qquad
w_{\mathrm{val}}\in[0,1].
\end{equation}
Here
\begin{equation}
\label{eq:reward_eq}
r_{\mathrm{val}}(x)
=
\begin{cases}
1, & x\in\mathcal M_{\mathrm{val}},\\
-1, & x\notin\mathcal M_{\mathrm{val}},
\end{cases}
\qquad
r_{\mathrm{prop}}(x;c)
=
\begin{cases}
g_k(d_c(x)), & x\in\mathcal M_{\mathrm{val}},\\
0, & x\notin\mathcal M_{\mathrm{val}}.
\end{cases}
\end{equation}
For regression tasks, we use
\(g_k(d)=\exp[-(d/\sigma_k)^2]\), with \(\sigma_k>0\).
For binary classification tasks, we set \(g_k(d)=1-d\), so \(r_{\mathrm{prop}}\) equals the oracle probability assigned to the target label.
Thus \(R(z_0;c)\in[-w_{\mathrm{val}},1]\).

\paragraph{Policy Optimization.}
Given the rollout distribution \(\mu_{\mathrm{RL}}\), the RL objective is
\begin{equation}
\label{eq:rl_objective}
J(\pi_\theta)
=
\mathbb E_{c\sim \mu_{\mathrm{RL}}}
\mathbb E_{\tau=(z_T,\ldots,z_0)\sim p_{\pi_\theta}(\cdot\mid c)}
\big[
R(z_0;c)
\big].
\end{equation}
We optimize \eqref{eq:rl_objective} with PPO initialized from the SFT checkpoint.
Because the reward is terminal-only and \(\gamma=1\), every reverse step in a rollout has return \(R(z_0;c)\).
At each MDP step \(h\), we use an MLP value head to estimate the state value \(V_\psi(s_h)\).
The advantage is \(A_h=R(z_0;c)-V_\psi(s_h)\).

For on-policy rollouts collected by \(\pi_{\theta_{\mathrm{old}}}\), the PPO importance ratio is
\begin{equation}
\label{eq:ratio_main}
\rho_h
=
\frac{\pi_\theta(a_h\mid s_h)}
{\pi_{\theta_{\mathrm{old}}}(a_h\mid s_h)}.
\end{equation}
We maximize the clipped PPO surrogate
\begin{equation}
\label{eq:ppo_clip_main}
\mathcal J_{\mathrm{clip}}(\theta)
=
\mathbb E\!\left[
\sum_{h=0}^{T-1}
\min\!\Big(
\rho_h A_h,\;
\mathrm{clip}(\rho_h,1-\epsilon,1+\epsilon)A_h
\Big)
\right],
\end{equation}
In implementation, we optimize the PPO loss with critic loss, entropy bonus, and KL regularization.
Details are given in Appendix~\ref{app:ppo_details}.


\subsection{Theoretical Analysis}
\label{sec:theory}
We analyze the RL stage and the motif-aware decision space.
Let \(\pi_{\mathrm{ref}}\) denote the frozen SFT reference policy used for KL regularization, \(\mathcal A(s)\) the finite reverse action space at state \(s\), and \(p_\pi(\tau\mid c)\) the trajectory distribution induced by policy \(\pi\) under condition \(c\).

\subsubsection{The Role of RL in Condition Control}
\label{sec:theory_rl_improves}
As in \cref{sec:method_policy}, reverse diffusion defines a finite-horizon MDP with terminal reward \(R(z_0;c)\) on the final clean motif graph \(z_0\).
Following KL-regularized control and control-as-inference formulations
\citep{todorov2006linearly,levine2018reinforcement}, for a fixed condition \(c\) we consider the population objective
\begin{equation}
\label{eq:regularized_control_objective}
\mathcal J_\beta(\pi;c)
=
\mathbb E_{\tau=(z_T,\ldots,z_0)\sim p_\pi(\cdot\mid c)}
\left[
R(z_0;c)
-
\beta
\sum_{h=0}^{T-1}
\mathrm{KL}\!\left(
\pi(\cdot\mid s_h)
\,\|\,
\pi_{\mathrm{ref}}(\cdot\mid s_h)
\right)
\right],
\quad
\beta>0.
\end{equation}
This objective is a population-level analytical reference, while PPO approximates the corresponding policy improvement in practice using sampled rollouts and a learned critic.

\begin{proposition}[Regularized Bellman characterization]
\label{prop:soft_bellman}
For a fixed condition \(c\), assume policy \(\pi\) is supported on the support of \(\pi_{\mathrm{ref}}\), ensuring the KL term is finite.
Let \(V_h^\star(s)\) denote the optimal KL-regularized value from MDP step \(h\).
At the terminal step, for \(s_T=(z_0,0,c)\),
$
V_T^\star(s_T)=R(z_0;c).
$
For \(h=T-1,\dots,0\), define
\begin{align}
Q_h^\star(s,a)
&=
V_{h+1}^\star(s'),
\qquad
s'=(a,T-h-1,c),
\\
V_h^\star(s)
&=
\beta
\log
\sum_{a\in\mathcal A(s)}
\pi_{\mathrm{ref}}(a\mid s)
\exp\!\left(
\frac{Q_h^\star(s,a)}{\beta}
\right).
\end{align}
Then the maximizer of \cref{eq:regularized_control_objective} is
\begin{equation}
\label{eq:soft_opt_policy}
\pi_h^\star(a\mid s)
=
\pi_{\mathrm{ref}}(a\mid s)
\exp\!\left(
\frac{Q_h^\star(s,a)-V_h^\star(s)}{\beta}
\right),
\end{equation}
with support contained in that of the reference policy, i.e.,
\(\pi_h^\star(a\mid s)=0\) whenever \(\pi_{\mathrm{ref}}(a\mid s)=0\).
The optimal value is
\[
\mathbb E_{z_T\sim p(\cdot)}
\left[
V_0^\star((z_T,T,c))
\right].
\]
\end{proposition}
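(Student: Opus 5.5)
We prove the statement by backward induction on the MDP step \(h\), using the Gibbs variational principle (the Donsker--Varadhan identity for KL on a finite set) at each step. The MDP of \cref{sec:method_policy} has deterministic transitions \(s_{h+1}=(a_h,T-h-1,c)\) and a terminal-only reward. Hence the objective \cref{eq:regularized_control_objective} decomposes into a terminal reward plus a sum of per-step KL penalties, each depending only on the current state's action distribution. Define, for an arbitrary policy \(\pi\), the regularized value-to-go
\[
V_h^\pi(s)=\mathbb E_\pi\Big[R(z_0;c)-\beta\textstyle\sum_{h'=h}^{T-1}\mathrm{KL}\big(\pi(\cdot\mid s_{h'})\,\|\,\pi_{\mathrm{ref}}(\cdot\mid s_{h'})\big)\,\Big|\,s_h=s\Big].
\]
The first fact to record is the one-step recursion
\[
V_h^\pi(s)=\sum_a \pi(a\mid s)\,V_{h+1}^\pi(s')-\beta\,\mathrm{KL}\big(\pi(\cdot\mid s)\|\pi_{\mathrm{ref}}(\cdot\mid s)\big),
\]
with \(V_T^\pi(s_T)=R(z_0;c)\). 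It follows directly from the Markov property and determinism of the transition.

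The key lemma is the finite-action variational identity. For any function \(Q\) on \(\mathcal A(s)\) and any distribution \(\pi\ll\pi_{\mathrm{ref}}(\cdot\mid s)\),
\[
\sum_a\pi(a)Q(a)-\beta\,\mathrm{KL}(\pi\|\pi_{\mathrm{ref}})
=\beta\log Z-\beta\,\mathrm{KL}(\pi\|\pi^\star),
\]
where \(Z=\sum_a\pi_{\mathrm{ref}}(a)e^{Q(a)/\beta}\) and \(\pi^\star(a)=\pi_{\mathrm{ref}}(a)e^{Q(a)/\beta}/Z\). This is verified by expanding \(\log(\pi/\pi^\star)=\log(\pi/\pi_{\mathrm{ref}})-Q/\beta+\log Z\). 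Since \(\mathrm{KL}\ge0\), with equality iff \(\pi=\pi^\star\), the left side is at most \(\beta\log Z\), with the unique maximizer \(\pi^\star\). Moreover, \(\pi^\star\) vanishes wherever \(\pi_{\mathrm{ref}}\) does, which gives the support claim. The finiteness of \(\mathcal A(s)\) and the absolute-continuity assumption make every sum finite and well-defined. The reward is bounded in \([-w_{\mathrm{val}},1]\), so \(Q\) is bounded and \(Z\in(0,\infty)\).

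The induction then runs as follows. The hypothesis at step \(h+1\) is that \(V_{h+1}^\pi\le V_{h+1}^\star\) pointwise for every admissible \(\pi\), with equality for \(\pi^\star\). At step \(h\), monotonicity of the recursion in \(V_{h+1}\) gives
\[
V_h^\pi(s)\le\sum_a\pi(a\mid s)Q_h^\star(s,a)-\beta\,\mathrm{KL}\big(\pi(\cdot\mid s)\|\pi_{\mathrm{ref}}(\cdot\mid s)\big).
\]
The lemma bounds the right side by \(V_h^\star(s)\), with equality exactly when \(\pi(\cdot\mid s)=\pi_h^\star(\cdot\mid s)\). Rewriting \(\pi^\star\) using \(\beta\log Z=V_h^\star(s)\) yields the stated form \cref{eq:soft_opt_policy}. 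Finally, averaging over \(z_T\sim p(\cdot)\) expresses \(\mathcal J_\beta(\pi;c)\) as \(\mathbb E_{z_T}[V_0^\pi((z_T,T,c))]\). This is bounded by \(\mathbb E_{z_T}[V_0^\star]\), with equality for \(\pi^\star\).

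The main subtlety is making the induction honest when states are only reached with positive probability under some policies. I would handle it by proving the pointwise bound for all states \(s\) at step \(h\), not just reachable ones; this is possible since the recursion is defined for every state. Uniqueness should also be stated carefully: the maximizer is unique only on states visited with positive probability, and off-support choices are irrelevant to the objective. I would state the result as \(\pi^\star\) being a maximizer that is unique up to such null states.
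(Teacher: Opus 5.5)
Your proposal is correct and follows essentially the same route as the paper: backward induction from $V_T^\star=R(z_0;c)$, applying at each state the Gibbs variational identity (proved, as in the paper, via $\sum_a\pi(a)Q(a)-\beta\,\mathrm{KL}(\pi\|\pi_{\mathrm{ref}})=\beta\log Z-\beta\,\mathrm{KL}(\pi\|\pi^\star)$), and then averaging over $z_T$. Your explicit comparison $V_h^\pi\le V_h^\star$ via the policy-evaluation recursion is a welcome tightening of the paper's terser induction, which takes the principle of optimality for granted. So is your remark that the maximizer is unique only on states reachable with positive probability; the paper asserts uniqueness without that qualification.
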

The optimal policy is an exponential reweighting of the reference reverse kernel toward actions with larger downstream value \(Q_h^\star\).
For any two actions \(a,a'\) with positive reference probability,
\begin{equation}
\label{eq:relative_action_tilt}
\frac{
\pi_h^\star(a\mid s)/\pi_{\mathrm{ref}}(a\mid s)
}{
\pi_h^\star(a'\mid s)/\pi_{\mathrm{ref}}(a'\mid s)
}
=
\exp\!\left(
\frac{
Q_h^\star(s,a)-Q_h^\star(s,a')
}{\beta}
\right).
\end{equation}
Thus, actions with higher downstream value are amplified relative to \(\pi_{\mathrm{ref}}\).
The proof follows by applying a standard Gibbs variational identity at each state, details are given in Appendix~\ref{app:gibbs_identity}.

\begin{corollary}[Downstream-value amplification]
\label{cor:bellman_amplification}
Fix a state \(s\), a reverse step \(h\), and a subset of actions \(\mathcal G\subseteq\mathcal A(s)\).
Suppose there exist \(b\in\mathbb R\) and \(\Delta>0\) such that
\[
Q_h^\star(s,a)\ge b+\Delta
\quad\text{for all }a\in\mathcal G,
\qquad
Q_h^\star(s,a)\le b
\quad\text{for all }a\notin\mathcal G.
\]
Let \(p_G:=\pi_{\mathrm{ref}}(\mathcal G\mid s)\).
Then
\begin{equation}
\label{eq:bellman_good_action_mass}
\pi_h^\star(\mathcal G\mid s)
\ge
\frac{
e^{\Delta/\beta}p_G
}{
e^{\Delta/\beta}p_G+1-p_G
}.
\end{equation}
In particular, high-value actions with nonzero reference support receive amplified probability mass, while unsupported actions remain unsupported.
\end{corollary}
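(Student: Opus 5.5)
The plan is to start directly from the closed form of the optimal policy in \cref{prop:soft_bellman}, namely \cref{eq:soft_opt_policy}. Because \(V_h^\star(s)\) is exactly the log-partition function, we have
\[
\pi_h^\star(a\mid s)=\frac{\pi_{\mathrm{ref}}(a\mid s)\,e^{Q_h^\star(s,a)/\beta}}{\sum_{a'}\pi_{\mathrm{ref}}(a'\mid s)\,e^{Q_h^\star(s,a')/\beta}}.
\]
Summing over \(\mathcal G\) then writes the good-action mass as
\[
\pi_h^\star(\mathcal G\mid s)=\frac{N_G}{N_G+N_B},
\qquad
N_G=\sum_{a\in\mathcal G}\pi_{\mathrm{ref}}(a\mid s)e^{Q_h^\star(s,a)/\beta},
\quad
N_B=\sum_{a\notin\mathcal G}\pi_{\mathrm{ref}}(a\mid s)e^{Q_h^\star(s,a)/\beta}.
\]
The function \(u\mapsto u/(u+v)\) is increasing in \(u\) and decreasing in \(v\) for \(u,v\ge 0\) with \(u+v>0\). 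So it is enough to bound \(N_G\) from below and \(N_B\) from above.

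The hypotheses give \(N_G\ge e^{(b+\Delta)/\beta}p_G\) and \(N_B\le e^{b/\beta}(1-p_G)\), using that \(\pi_{\mathrm{ref}}(\cdot\mid s)\) sums to one. Substituting these bounds and cancelling the common factor \(e^{b/\beta}\) yields \cref{eq:bellman_good_action_mass}.

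Two edge cases need a short remark. If \(p_G=0\), the bound is trivially \(0\). Otherwise the denominator is positive. Actions with \(\pi_{\mathrm{ref}}(a\mid s)=0\) contribute nothing to either sum, so they cause no trouble even if their \(Q\)-values violate nothing.

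For the ``in particular'' clause, note that the bound exceeds \(p_G\) whenever \(0<p_G<1\), since \(\Delta>0\). Indeed, \(e^{\Delta/\beta}p_G/(e^{\Delta/\beta}p_G+1-p_G)>p_G\) is equivalent to \(e^{\Delta/\beta}>e^{\Delta/\beta}p_G+1-p_G\), i.e.\ \((e^{\Delta/\beta}-1)(1-p_G)>0\). The fact that unsupported actions remain unsupported is just the support statement already contained in \cref{prop:soft_bellman}.

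No step is a real obstacle. The only point requiring care is the monotonicity argument: lowering the numerator term and raising the other denominator term together must give a valid lower bound. I would justify this by writing the ratio as \(1/(1+N_B/N_G)\).
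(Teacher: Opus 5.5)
Your proposal is correct and follows the paper's proof essentially step for step. Like the paper, you write \(\pi_h^\star(\mathcal G\mid s)\) as a normalized ratio using \cref{eq:soft_opt_policy}, bound the \(\mathcal G\)-sum below by \(e^{(b+\Delta)/\beta}p_G\) and the complement above by \(e^{b/\beta}(1-p_G)\), then cancel \(e^{b/\beta}\). Your monotonicity argument, the edge-case remarks, and the check that the bound exceeds \(p_G\) when \(0<p_G<1\) spell out what the paper leaves implicit. One small point: the bound's denominator \(e^{\Delta/\beta}p_G+1-p_G\) is always positive, so \(p_G=0\) needs no separate treatment.
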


This provides a mechanism by which RL can improve target controllability: KL-regularized optimization amplifies high-value denoising choices while staying close to the SFT reference policy.
This amplification requires target-relevant actions to have nonzero support under \(\pi_{\mathrm{ref}}\). The readiness analysis in Appendix~\ref{app:experimental_sft_readiness} empirically probes this support through repeated sampling from the SFT checkpoint of \MODEL.

\subsubsection{Decision Complexity of Atom-level and Motif-aware RL}
\label{sec:theory_rl}

The reverse action \(a_h=z_{T-h-1}\) is a structured graph-valued object.
For a representation \(r\in\{\mathrm{atom},\mathrm{motif}\}\), write the one-step action as
$
a=(a_1,\dots,a_{M_r(s)}),
$
where \(M_r(s)\) is the number of stochastic categorical factors in the reverse kernel at state \(s\).
For motif graphs, these factors include motif types, inter-motif bonds, and directed attachment positions.
Assume the policy and reference policy factorize as
\begin{equation}
\label{eq:conditional_factorized_policy}
\pi^{(r)}(a\mid s)
=
\prod_{j=1}^{M_r(s)}
\pi^{(r)}_j(a_j\mid s,a_{<j}),
\qquad
\pi^{(r)}_{\mathrm{ref}}(a\mid s)
=
\prod_{j=1}^{M_r(s)}
\pi^{(r)}_{\mathrm{ref},j}(a_j\mid s,a_{<j}).
\end{equation}

\begin{proposition}[Factorized one-step KL]
\label{prop:factorized_kl}
Under \cref{eq:conditional_factorized_policy},
\begin{equation}
\label{eq:factorized_kl}
\mathrm{KL}\!\left(
\pi^{(r)}(\cdot\mid s)
\,\|\,
\pi^{(r)}_{\mathrm{ref}}(\cdot\mid s)
\right)
=
\sum_{j=1}^{M_r(s)}
\mathbb E_{a_{<j}\sim \pi^{(r)}}
\!\left[
\mathrm{KL}\!\left(
\pi^{(r)}_j(\cdot\mid s,a_{<j})
\,\|\,
\pi^{(r)}_{\mathrm{ref},j}(\cdot\mid s,a_{<j})
\right)
\right].
\end{equation}
\end{proposition}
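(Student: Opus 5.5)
This is the chain rule for KL divergence on finite spaces, so the plan is to expand the log-ratio using the factorization and then take expectations term by term. Fix \(s\) and write \(M=M_r(s)\). Since the action space \(\mathcal A(s)\) is finite, every expectation is a finite sum, and no integrability issues arise. As in \cref{prop:soft_bellman}, we assume \(\pi^{(r)}(\cdot\mid s)\) is absolutely continuous with respect to \(\pi^{(r)}_{\mathrm{ref}}(\cdot\mid s)\); otherwise both sides are \(+\infty\), and we check this separately at the end.

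\textbf{Step 1: split the log-ratio.} On the support of \(\pi^{(r)}(\cdot\mid s)\), substitute \cref{eq:conditional_factorized_policy} into
\(\log \pi^{(r)}(a\mid s)/\pi^{(r)}_{\mathrm{ref}}(a\mid s)\).
The log of the ratio of products becomes
\[
\sum_{j=1}^{M}\log\frac{\pi^{(r)}_j(a_j\mid s,a_{<j})}{\pi^{(r)}_{\mathrm{ref},j}(a_j\mid s,a_{<j})}.
\]
Each term is well defined there, because \(\pi^{(r)}(a\mid s)>0\) forces every policy factor to be positive, and absolute continuity then forces every reference factor to be positive.

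\textbf{Step 2: take expectations term by term.} Apply \(\mathbb E_{a\sim\pi^{(r)}(\cdot\mid s)}\) and use linearity. For the \(j\)-th term, the summand depends only on \(a_{\le j}\). Marginalizing \(a_{>j}\) out of the product form, the factors \(j+1,\dots,M\) sum to one. The joint law of \(a_{\le j}\) is therefore \(\pi^{(r)}(a_{<j}\mid s)\,\pi^{(r)}_j(a_j\mid s,a_{<j})\). The iterated expectation
\[
\mathbb E_{a_{<j}\sim\pi^{(r)}}\Bigl[\sum_{a_j}\pi^{(r)}_j(a_j\mid s,a_{<j})\log\tfrac{\pi^{(r)}_j(a_j\mid s,a_{<j})}{\pi^{(r)}_{\mathrm{ref},j}(a_j\mid s,a_{<j})}\Bigr]
\]
is exactly the \(j\)-th conditional KL in \cref{eq:factorized_kl}. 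Summing over \(j\) gives the claim.

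\textbf{Step 3: the support bookkeeping (the main obstacle).} This is where some care is needed; the algebra itself is routine. We use the convention \(0\log 0=0\) and need the infinite case to match on both sides. If some \(a\) has \(\pi^{(r)}(a\mid s)>0=\pi^{(r)}_{\mathrm{ref}}(a\mid s)\), take the first index \(j\) at which the reference factor vanishes. The prefix \(a_{<j}\) then has positive probability under \(\pi^{(r)}\), and \(\pi^{(r)}_j\) charges a point where \(\pi^{(r)}_{\mathrm{ref},j}\) is zero. The \(j\)-th conditional KL is therefore \(+\infty\) on an event of positive probability, so the right-hand side is also \(+\infty\). Conversely, if the right-hand side is infinite, the same argument run in reverse produces such an \(a\), so the left-hand side is infinite too. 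With this check, the identity holds in \([0,\infty]\) in all cases.
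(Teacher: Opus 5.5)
Your proposal is correct and follows essentially the same route as the paper: substitute the factorization into the log-ratio to get a sum over $j$, use linearity of expectation under $a\sim\pi^{(r)}(\cdot\mid s)$, and marginalize out $a_{>j}$ so that each term becomes the expected $j$-th conditional KL. Your Step~3 on the $+\infty$ case is a correct addition; the paper does not need it because it assumes $\pi(\cdot\mid s)\ll\pi_{\mathrm{ref}}(\cdot\mid s)$ at every state throughout its appendix.
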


\begin{corollary}[KL budget per subdecision]
\label{cor:kl_budget_dilution}
If
$\mathrm{KL}\!\left(
\pi^{(r)}(\cdot\mid s)
\,\|\,
\pi^{(r)}_{\mathrm{ref}}(\cdot\mid s)
\right)
\le \eta,
$
then
\begin{equation}
\label{eq:average_factor_kl}
\frac{1}{M_r(s)}
\sum_{j=1}^{M_r(s)}
\mathbb E_{a_{<j}\sim \pi^{(r)}}
\!\left[
\mathrm{KL}\!\left(
\pi^{(r)}_j(\cdot\mid s,a_{<j})
\,\|\,
\pi^{(r)}_{\mathrm{ref},j}(\cdot\mid s,a_{<j})
\right)
\right]
\le
\frac{\eta}{M_r(s)}.
\end{equation}
\end{corollary}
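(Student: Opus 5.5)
This follows directly from \cref{prop:factorized_kl}: take the exact chain-rule decomposition of the one-step KL, bound it by the assumed budget, and divide by the number of factors. Write
\[
\kappa_j := \mathbb E_{a_{<j}\sim \pi^{(r)}}\!\left[\mathrm{KL}\!\left(\pi^{(r)}_j(\cdot\mid s,a_{<j})\,\|\,\pi^{(r)}_{\mathrm{ref},j}(\cdot\mid s,a_{<j})\right)\right]
\]
for the expected KL of the \(j\)-th subdecision.

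Under the factorization \cref{eq:conditional_factorized_policy}, \cref{prop:factorized_kl} gives
\[
\sum_{j=1}^{M_r(s)}\kappa_j=\mathrm{KL}\!\left(\pi^{(r)}(\cdot\mid s)\,\|\,\pi^{(r)}_{\mathrm{ref}}(\cdot\mid s)\right).
\]
The hypothesis bounds the right-hand side by \(\eta\). Since \(M_r(s)\ge 1\) is a positive integer, dividing both sides by \(M_r(s)\) yields \cref{eq:average_factor_kl}.

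Two small points need checking.

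\emph{Finiteness and sign of each term.} Each \(\kappa_j\ge 0\), because every conditional KL is nonnegative (Gibbs' inequality) and an expectation of nonnegative quantities is nonnegative. The hypothesis makes the total KL finite. Together these imply every \(\kappa_j\) is finite, so the sum identity involves no \(\infty-\infty\) issue. The finite total KL also forces \(\pi^{(r)}(\cdot\mid s)\ll\pi^{(r)}_{\mathrm{ref}}(\cdot\mid s)\). Hence the conditionals are absolutely continuous for \(\pi^{(r)}\)-almost every prefix \(a_{<j}\), which is what \cref{prop:factorized_kl} needs to apply.

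\emph{Degenerate case.} If \(M_r(s)=0\), the action is deterministic and the statement is vacuous, so I assume \(M_r(s)\ge1\).

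There is no real obstacle here; the only care needed is this bookkeeping about nonnegativity and absolute continuity. It might be worth noting, as an interpretive remark rather than part of the proof, a consequence of nonnegativity: some factor must satisfy \(\kappa_j\le \eta/M_r(s)\), and at most \(M_r(s)/k\) factors can have \(\kappa_j\ge k\eta/M_r(s)\) (Markov-style counting). This is the dilution effect that separates atom-level from motif-aware actions when \(M_{\mathrm{atom}}(s)\gg M_{\mathrm{motif}}(s)\).
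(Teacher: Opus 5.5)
Your proof is correct and follows the paper's own argument: invoke Proposition~\ref{prop:factorized_kl} to write the one-step KL as the sum of the expected per-factor KLs, bound that sum by $\eta$, and divide by $M_r(s)$. Your added checks on nonnegativity and absolute continuity are sound but not needed for the paper's proof, since its appendix already assumes $\pi(\cdot\mid s)\ll\pi_{\mathrm{ref}}(\cdot\mid s)$ at every state.
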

Therefore, under the same reference-KL regularization, a larger \(M_r(s)\) leaves less average policy change per categorical factor. Since atom-level representations induce larger \(M_r(s)\) through atom and atom-pair decisions, terminal-reward policy improvement becomes difficult.

We next compare this decision size under atom-only and motif-aware representations. For a molecule \(x\), let \(n_{\mathrm{atom}}(x)\) and \(n_{\mathrm{motif}}(x)\) denote its numbers of active atoms and active motifs. Since motifs partition the atom set, \(1\le n_{\mathrm{motif}}(x)\le n_{\mathrm{atom}}(x)\).
To compare one-step reverse decision sizes, define
\begin{equation}
\label{eq:action_count}
\begin{aligned}
L_{\mathrm{atom}}(n) &:= n+\binom{n}{2}=\frac{n^2+n}{2}, \qquad
L_{\mathrm{motif}}(n) &:= n+\binom{n}{2}+n(n-1)=\frac{3n^2-n}{2}.
\end{aligned}
\end{equation}
Here \(L_{\mathrm{atom}}\) counts atom-type and atom-bond decisions in a standard graph diffusion model, while \(L_{\mathrm{motif}}\) counts motif-type, inter-motif bond, and directed attachment-position decisions in the motif graph. 

\begin{corollary}[Motif-aware reduction of reverse decision size]
\label{cor:motif_factor_reduction}
Let \(\chi(x):=n_{\mathrm{atom}}(x)/n_{\mathrm{motif}}(x)\) be the atom-to-motif compression ratio. Then
\begin{equation}
\label{eq:motif_atom_count_ratio}
\frac{
L_{\mathrm{motif}}(n_{\mathrm{motif}}(x))
}{
L_{\mathrm{atom}}(n_{\mathrm{atom}}(x))
}
=
\frac{
3n_{\mathrm{motif}}(x)^2-n_{\mathrm{motif}}(x)
}{
n_{\mathrm{atom}}(x)^2+n_{\mathrm{atom}}(x)
}
\le
\frac{3}{\chi(x)^2}.
\end{equation}
Thus, sufficiently large motif compression yields a substantially smaller one-step reverse decision size despite the additional attachment-position channel.
\end{corollary}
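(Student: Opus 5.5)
The plan is a direct algebraic comparison. The equality in \cref{eq:motif_atom_count_ratio} follows by substituting the closed forms from \cref{eq:action_count}. The factors of \(\tfrac12\) in \(L_{\mathrm{motif}}(n)=\tfrac{3n^2-n}{2}\) and \(L_{\mathrm{atom}}(n)=\tfrac{n^2+n}{2}\) cancel. For the inequality, write \(n_m:=n_{\mathrm{motif}}(x)\) and \(n_a:=n_{\mathrm{atom}}(x)\), so that \(n_a=\chi(x)\,n_m\) with \(\chi(x)\ge 1\), since motifs partition the atom set and \(1\le n_m\le n_a\).

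First bound the numerator from above by dropping the negative linear term: \(3n_m^2-n_m\le 3n_m^2\), which holds because \(n_m\ge 1>0\). Then bound the denominator from below by dropping the positive linear term: \(n_a^2+n_a\ge n_a^2>0\). Both numerator and denominator are positive, since \(3n_m^2-n_m=n_m(3n_m-1)\ge 2\). The ratio is therefore at most
\[
\frac{3n_m^2}{n_a^2}=\frac{3}{(n_a/n_m)^2}=\frac{3}{\chi(x)^2},
\]
which is the claim.

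There is no real obstacle here. The only points to check are that both quantities in the ratio are positive, so the ratio is well defined and the inequalities preserve direction, and that the definition of \(\chi\) is used consistently. A slightly sharper bound, \(\frac{3n_m^2-n_m}{n_a^2+n_a}<\frac{3}{\chi^2}\) with strict inequality, also follows, but the stated non-strict bound suffices.

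For the interpretive remark after the corollary: the ratio is below \(1\) whenever \(\chi(x)>\sqrt3\). This means that moderate compression already offsets the extra \(n(n-1)\) directed attachment-position factors. Combined with \cref{cor:kl_budget_dilution}, it gives a larger per-factor KL budget \(\eta/M_r(s)\) in the motif representation.
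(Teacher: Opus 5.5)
Your proposal is correct and follows the paper's proof: substitute the closed forms from \cref{eq:action_count} to get the equality, then bound the numerator by \(3n_{\mathrm{motif}}(x)^2\) and the denominator by \(n_{\mathrm{atom}}(x)^2\) to obtain \(3/\chi(x)^2\). Your positivity checks and the strict-inequality remark are valid details that the paper leaves implicit.
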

Combining Corollary~\ref{cor:kl_budget_dilution} and Corollary~\ref{cor:motif_factor_reduction}, motif-aware actions improve the KL-regularized optimization geometry by reducing the number of categorical decisions that must be adjusted at each denoising step. 
In our implementation, the tokenizer yields an average atom-to-motif compression ratio above \(5.5\times\) for polymers (see Appendix~\ref{app:motif_tokenizer}), implying an upper-bound factor-count ratio of roughly \(3/5.5^2\approx 0.10\). 
The same tokenizer configuration yields weaker compression on small molecules, which is consistent with a relatively modest performance gain on FreeSolv in \cref{tab:main_results_mol}.

\section{Experiments}
\label{sec:experiment}
\textbf{RQ1}: We validate the controllable generative power of \MODEL against baselines from molecular optimization, generative modeling, and RL-based graph generation in \cref{sec:rq1}.
\textbf{RQ2}: We conduct further analysis to examine \MODEL in \cref{sec:rq2}.

\subsection{Experimental Setup}
\label{sec:exp-setup}
We evaluate \MODEL on three heterogeneous drug and material design task sets, covering both numerical and categorical properties.
For each task set, \MODEL is trained jointly across all tasks.
We also include \MODELSFT{}, an ablated variant without RL post-training, to isolate the effect of RL.
We assess performance across seven metrics covering validity, distribution learning, and condition control. 
Full dataset statistics and implementation details are provided in Appendix~\ref{app:dataset} and Appendix~\ref{app:experiments}.

\paragraph{Datasets}
For PT, we construct two unlabeled datasets: 13k real polymers for materials modeling and 10k molecules sampled from MoleculeNet~\citep{wu2018moleculenet} for drug-related tasks. 
For SFT, RL, and evaluation, we consider three task sets: (1) numerical polymer gas permeability conditions (O$_2$Perm, CO$_2$Perm, N$_2$Perm)~\citep{gaspolymer2012}; (2) numerical polymer density-functional-theory (DFT) properties (Eea, Egb, Egc)~\citep{Xu2023TransPolymer}; (3) a drug-related task set spanning physical chemistry (FreeSolv regression), biophysics (BACE classification), and physiology (BBBP classification)~\citep{molecule_opt_2022}. 
Two additional DFT properties, Ei and EPS, are held out from training for unseen-target generalization.
We separate gas permeability and DFT properties for polymers since they represent distinct property families. A six-target polymer joint-training analysis is provided in Appendix~\ref{app:polymer_joint_training}.

\paragraph{Evaluation}
We use an 8:1:1 train/validation/test split and evaluate conditional generation on held-out test conditions. 
For each setting, we generate 10,000 samples and report: (1) molecular validity (Validity); (2) internal diversity among the generated examples (Diversity); (3) fragment-based similarity with the reference set (Similarity); (4) Fr\'echet ChemNet Distance with the reference set (Distance)~\citep{frechetdis}; (5)-(7) MAE/Accuracy for the numerical/categorical task conditions (Property). 
Lower MAE or higher accuracy indicates stronger model controllability. 
We use Random Forests oracles for drug tasks~\citep{molecule_opt_2022} and GRIN for polymer tasks~\citep{grin2025}. Oracle analysis is provided in Appendix~\ref{app:experimental_oracle_selections}. We additionally report novelty and uniqueness in Appendix~\ref{app:exp_novelty_uniqueness}.

\paragraph{Baselines}
We compare against three families of baselines: 
(1) molecular optimization methods, including Graph-GA~\citep{jensen2019graphga}, MARS~\citep{xie2021mars}, JTVAE with Bayesian optimization (JTVAE-BO)~\citep{jtvae}, and SMILES-LSTM with hill climbing (LSTM-HC)~\citep{lstm}; 
(2) graph generative models, including GDSS~\citep{gdss}, DiGress~\citep{vignac2023digress}, MOOD~\citep{mood}, Graph DiT~\citep{liu2024graphdit}, and DeFoG~\citep{qin2025defog};  
(3) RL-based graph generation methods, including FREED~\citep{yang2021freed} and GDPO~\citep{liu2024gdpo}.
Baselines are trained in their standard task-specialized setting. For task-set-level distribution metrics, we report the strongest specialized baseline.

\subsection{RQ1: Heterogeneous Conditional Molecular Generation}
\label{sec:rq1}
\textbf{As shown in \cref{tab:main_results_gas,tab:main_results_dft,tab:main_results_mol},} \MODEL \textbf{achieves the best controllability on all nine targets and maintains at least 0.94 validity even without rule checking}.

\begin{table*}[t]
\centering
\small
\setlength{\tabcolsep}{5pt}
\caption{Heterogeneous Conditional Molecular Generation of 10K Polymers: Results on three numerical properties (gas permeability for O$_2$, N$_2$, CO$_2$). MAE is calculated between the input
conditions and the properties of the generated polymers using oracles. Best results are in \BEST{red}.}
\label{tab:main_results_gas}
\resizebox{\textwidth}{!}{%
\begin{tabular}{lcccccccc}
\toprule
\multirow{2}{*}{Model}
& \multicolumn{1}{c}{Validity $\uparrow$}
& \multicolumn{3}{c}{Distribution Learning}
& \multicolumn{4}{c}{Condition Control} \\
\cmidrule(lr){3-5} \cmidrule(lr){6-9}
& (w/o rule checking)
& Diversity $\uparrow$
& Similarity $\uparrow$
& Distance $\downarrow$
& O$_2$Perm $\downarrow$
& N$_2$Perm $\downarrow$
& CO$_2$Perm $\downarrow$
& Avg. MAE $\downarrow$ \\
\midrule
Graph GA             & 1.0000 (N.A.)   & 0.8838 & 0.9233 &  9.1974 &  1.9921 & 2.3010 & 1.9577 & 2.0836\\
MARS                 & 1.0000 (N.A.)   & 0.8302 & 0.9313 &  7.5231 &  1.8755 & 2.3397 & 1.6079 & 1.9410\\
LSTM-HC              & 0.9900 (N.A.)   & 0.8637 & 0.9492 & 6.3520  &  1.2168 & 1.5258 & 1.1370 &  1.2932\\
JTVAE-BO             & 1.0000 (N.A.)   & 0.7566 & 0.7524 & 24.6281 &  1.4501 & 1.2734 & 1.1529 & 1.2921\\
\midrule
DiGress              & 0.9820 (0.2543) & 0.8961 & 0.4350 & 26.0657 &  1.3147 & 1.5640 & 1.4411 &  1.4399  \\
DiGress v2           & 0.9788 (0.3927) & 0.9118 & 0.3027 & 18.9923 & 1.3355  & 1.5506 & 1.4427 &  1.4429 \\
GDSS                 & 0.9412 (0.8876) & 0.7829 & 0.0030 & 31.2573 & 1.4821  & 1.5844 & 1.3503 &  1.4723 \\
MOOD                 & 0.9713 (0.9004) & 0.8129 & 0.0924 & 35.8194 & 1.5727  & 1.6971 & 1.4646 & 1.5781\\
DeFoG                & 0.4609 (0.2988) & \BEST{0.9173} & 0.6828 & 15.1224 & 1.4314 & 1.5850 & 1.1998 & 1.4054 \\
Graph DiT            & 0.8176 (0.8536) & 0.8725 & 0.9624 & \BEST{5.8218} & 0.8185 & 0.9472 & 0.7836 & 0.8498 \\
\midrule
FREED                & 0.9466 (N.A.)   & 0.8284 & 0.3046 & 13.9356 & 0.7376 & 0.8423 & 0.7652 & 0.7817 \\
GDPO                 & 0.2578 (0.2965) & 0.8766 & 0.0973 & 28.4533 & 0.7934 & 0.8650 & 0.7527 & 0.8037\\
\midrule
\MODELSFT{} (ours)      & 0.9883 (0.9844) & 0.8564 & \BEST{0.9630}  &  9.4849 & 0.6442 & 0.7888 & 0.6709 & 0.7013 \\
\MODEL (ours)         & 0.9539 (0.9500) & 0.8644 & 0.9405 & 9.9854 & \BEST{0.3975} & \BEST{0.4720} & \BEST{0.4250} & \BEST{0.4315}\\
\bottomrule
\end{tabular}%
}
\end{table*}

\begin{table*}[t]
\centering
\small
\setlength{\tabcolsep}{5pt}
\caption{Heterogeneous Conditional Molecular Generation of 10K Polymers: Results on three numerical properties (Eea, Egb, Egc). MAE is calculated between the input
conditions and the generated properties. Best results are in \BEST{red}.}
\label{tab:main_results_dft}
\resizebox{\textwidth}{!}{%
\begin{tabular}{lcccccccc}
\toprule
\multirow{2}{*}{Model}
& \multicolumn{1}{c}{Validity $\uparrow$}
& \multicolumn{3}{c}{Distribution Learning}
& \multicolumn{4}{c}{Condition Control} \\
\cmidrule(lr){3-5} \cmidrule(lr){6-9}
& (w/o rule checking)
& Diversity $\uparrow$
& Similarity $\uparrow$
& Distance $\downarrow$
& Eea $\downarrow$
& Egb $\downarrow$
& Egc $\downarrow$
& Avg. MAE $\downarrow$ \\
\midrule
Graph GA    & 1.0000 (N.A.) & 0.8552 & 0.8569 & 7.9635 & 1.2773 & 1.4989 & 1.3846 &  1.3869 \\
MARS        & 1.0000 (N.A.) & 0.8617 & 0.8899 & 7.9519 &  1.3798 & 1.6541 & 1.4233 &  1.4857  \\
LSTM-HC     & 0.9976 (N.A.) & 0.8641 & 0.9325 & 5.2350 &  1.1667 & 1.3761 & 1.2062 &  1.2497  \\
JTVAE-BO    & 1.0000 (N.A.) & 0.7859 & 0.7520 & 11.7901 &  1.2574 & 1.4103 & 1.2900 &  1.3192  \\
\midrule
DiGress     & 0.6970 (0.4834) & 0.9112 & 0.4908 & 13.2764 & 0.8473 & 1.2600 & 0.7520 &  0.9531  \\
DiGress v2  & 0.7521 (0.5020) & 0.9055 & 0.5043 & 15.3710 & 0.8137 & 1.2511 & 0.7603 &  0.9417  \\
GDSS        & 0.6240 (0.6220) & \BEST{0.9208} & 0.0368 & 18.8047 & 1.2254 & 2.8129 & 1.8973 & 1.9785  \\
MOOD        & 0.7417 (0.6835) & 0.8246 & 0.2032 & 18.2511 & 1.6882 & 2.3201 & 1.7129 &  1.9071  \\
DeFoG       & 0.6504 (0.5452) & 0.8986 & 0.8033 & 6.0314 & 0.6714 & 0.9164 &  0.7453 & 0.7777 \\
Graph DiT   & 0.6868 (0.6720) & 0.8791 & 0.8965 & 5.4507 & 0.6075 & 0.5192  & 0.6739 &  0.6002  \\
\midrule
FREED       & 1.0000 (N.A.)  & 0.8308 & 0.2123 & 15.8095 & 0.5687 & 1.3398 & 0.9175  &  0.9420 \\
GDPO        & 0.6504 (0.7598) & 0.8902 & 0.8118 &  8.1592 & 0.5244 & 0.7081 & 0.5762 &  0.6029  \\
\midrule
\MODELSFT{} (ours)        & 0.9688 (0.9546) & 0.8860 & \BEST{0.9686} & \BEST{5.1352} & 0.2874 & 0.4713 & 0.3772 &  0.3786 \\
\MODEL (ours)        & 0.9431 (0.9414) & 0.8902 & 0.8527 & 9.3369 & \BEST{0.2392} & \BEST{0.4129} & \BEST{0.2813} & \BEST{0.3111}\\
\bottomrule
\end{tabular}%
}
\end{table*}

\begin{table*}[t]
\centering
\small
\setlength{\tabcolsep}{5pt}
\renewcommand{\arraystretch}{1.12}
\caption{Heterogeneous Conditional Molecular Generation of 10K Molecules: Results on one numerical property (FreeSolv) and two categorical properties (BACE, BBBP). MAE/Accuracy is calculated between the input
conditions and the generated properties. Avg. Rank is obtained by ranking each method per metric and averaging across property metrics. Best results are in \BEST{red}.}
\label{tab:main_results_mol}
\resizebox{\textwidth}{!}{%
\begin{tabular}{lcccccccc}
\toprule
\multirow{2}{*}{Model}
& \multicolumn{1}{c}{Validity $\uparrow$}
& \multicolumn{3}{c}{Distribution Learning}
& \multicolumn{4}{c}{Condition Control} \\
\cmidrule(lr){3-5} \cmidrule(lr){6-9}
& (w/o rule checking)
& Diversity $\uparrow$
& Similarity $\uparrow$
& Distance $\downarrow$
& FreeSolv $\downarrow$
& BACE $\uparrow$
& BBBP $\uparrow$
& Avg. Rank $\downarrow$ \\
\midrule
Graph GA   & 1.0000 (N.A.) & 0.8806 & \BEST{0.9658} & 7.6709 &   2.0405 & 0.4691 &  0.3017  & 12.3333\\
MARS       & 1.0000 (N.A.) & 0.8571 & 0.7914 & 8.9636 &   2.3988 & 0.5320 &  0.5389   & 9.0000\\
LSTM-HC    & 0.9802 (N.A.) & 0.9122 & 0.8590 & 14.8023 &  1.8267 & 0.5076 &  0.5242   & 8.0000\\
JTVAE-BO   & 1.0000 (N.A.) & 0.6996 & 0.6078 & 33.4123 &  1.9124 & 0.4632 &  0.4955   & 11.0000\\
\midrule
DiGress    & 0.5061 (0.4203) & 0.8961 & 0.6959 & 6.5017 & 1.9023 & 0.5263 & 0.6726  &  7.3333  \\
DiGress v2 & 0.5172 (0.3921) & 0.9016 & 0.7160 & 7.5507 & 1.8906 & 0.5215 & 0.6771  &  7.0000  \\
GDSS       & 0.7928 (0.8053) & 0.8968 & 0.5733 & 9.5036 & 3.5770 & 0.5036  & 0.5044 &  12.0000  \\
MOOD       & 0.7619 (0.5787) & 0.9126 & 0.1901 & 37.2665 & 2.0941 & 0.5066 & 0.4821 &  11.3333  \\
DeFoG      & 0.5034 (0.3188) & \BEST{0.9265} & 0.3836 & 8.1281 & 1.9216 & 0.5587 & 0.4636 &  9.3333 \\
Graph DiT  & 0.8344 (0.8217) & 0.8732 & 0.9147 & 9.1130 & 1.2390 & 0.9133  & 0.9342 &  2.6667  \\
\midrule
FREED       & 0.9980 (N.A.) & 0.8326 & 0.1018 & 20.6090 & 2.4163 & 0.7756 & 0.9213 & 7.3333  \\
GDPO       & 0.7307 (0.7266) & 0.8992 & 0.7126 & \BEST{5.3391} & 1.3598 & 0.8457 & 0.8556 &  4.3333 \\
\midrule
\MODELSFT{} (ours)        & 0.9668 (0.9473) & 0.8816 & 0.9616 & 7.6326 & 1.3169 & 0.9504 & 0.9439 &  2.3333  \\
\MODEL (ours)           & 0.9570 (0.9505) & 0.8422 & 0.6401 & 17.6755 & \BEST{1.0764} & \BEST{1.0000} & \BEST{0.9550} &  \BEST{1.0000} \\
\bottomrule
\end{tabular}%
}
\end{table*}

\paragraph{Chemical Validity}
Reported validity can overstate generative quality when it relies on hard-coded chemical rules. 
For example, graph-search methods such as GraphGA explicitly discard invalid molecules during mutation and crossover, thereby achieving perfect final validity. 
Removing rule checking in the decoding step, DiGress and MOOD suffer substantial validity drops (e.g., 0.98 to 0.25). 
Graph DiT is more robust, maintaining validity above 0.6, but still remains below rule-filtered search methods.
GDPO achieves only 0.2965 validity on the gas-permeability task set, further highlighting the fragility of atom-level RL.
In contrast, both \MODEL and \MODELSFT{} maintain validity above 0.9 across all three datasets without rule-based checking, suggesting that the motif-aware graph space preserves chemical feasibility under both SFT and RL alignment.

\paragraph{Distribution Learning}
GraphGA and Graph DiT are strong in-distribution baselines, whereas several models such as GDSS and MOOD struggle to match the reference distribution.  
\MODELSFT{} is highly competitive, achieving the best fragment similarity on both polymer tasks (0.9630 and 0.9686) and the best Fr\'echet distance on the DFT task set.
RL post-training shifts generation toward target-aligned regions, mildly on the polymer task sets but more substantially on the drug task set, where FreeSolv, BACE, and BBBP involve distinct physical-chemistry, target-bioactivity, and physiological-permeability objectives.
This reflects the general controllability-distribution trade-off: when structure-property rationales are weakly correlated across tasks, improved controllability may require larger deviations from the learned generative prior.

\paragraph{Condition Controllability}
Graph DiT is a strong conditional generation baseline, and RL-based methods such as GDPO and FREED are competitive because they directly optimize target-aligned rewards.
Nevertheless, \MODEL achieves the best performance across all targets, with an average MAE reduction of 27\% compared to its SFT-only variant \MODELSFT{}, which itself ranks second overall in controllability.
Compared with the best baseline in each setting, \MODEL reduces MAE by 44.8\% over FREED on gas-permeability properties and by 48.2\% over Graph DiT on DFT properties. 
On the drug-related task set, \MODEL reduces FreeSolv MAE by over 13\% and achieves a BACE accuracy of 1.0.

\subsection{RQ2: Ablation Studies and Model Analysis}
\label{sec:rq2}

\paragraph{Atom-level RL}
We replace the motif-aware graph representation in \MODEL and \MODELSFT{} with an atom-level representation on the gas-permeability task set while keeping all other settings unchanged. 
As shown in \cref{tab:rq2_motif_rl}, atom-level RL fails under \(w_{\mathrm{val}}=0.1\): rule-free validity drops from 0.95 to 0.07, fragment similarity drops from 0.94 to 0.18, and Avg.\ MAE increases from 0.43 to 1.57.
Increasing the validity weight to \(w_{\mathrm{val}}=0.5\) partially recovers validity but further worsens controllability, indicating that reward reweighting cannot resolve the bottleneck of atom-level RL.
The advantage of motif-aware representations is also visible before RL. These results support that atom-level RL is fragile under terminal 
rewards over long edit sequences, where stable structure-property 
relationships cannot be reliably learned.

\begin{table*}[t]
\centering
\small
\setlength{\tabcolsep}{5pt}
\caption{Atom-level RL ablation on the gas-permeability dataset.
\(w_{\mathrm{val}}\) is the validity-reward weight.
Atom-level RL remains unstable and poorly controllable even with a larger validity weight.}
\label{tab:rq2_motif_rl}
\resizebox{\textwidth}{!}{%
\begin{tabular}{lccccccccc}
\toprule
\multirow{2}{*}{Model}
& \multirow{2}{*}{\(w_{\mathrm{val}}\)}
& \multicolumn{1}{c}{Validity $\uparrow$}
& \multicolumn{3}{c}{Distribution Learning}
& \multicolumn{4}{c}{Condition Control} \\
\cmidrule(lr){4-6} \cmidrule(lr){7-10}
&
& (w/o rule checking)
& Diversity $\uparrow$
& Similarity $\uparrow$
& Distance $\downarrow$
& O$_2$Perm $\downarrow$
& N$_2$Perm $\downarrow$
& CO$_2$Perm $\downarrow$
& Avg. MAE $\downarrow$ \\
\midrule
\MODELSFT{}-atom    & -- & 0.6876 (0.6374) & 0.8902 & 0.8975 & 13.3321 & 1.0961 & 1.3941 & 1.0761 & 1.1888 \\
\MODELSFT{}         & -- & 0.9883 (0.9844) & 0.8564 & 0.9630 & 9.4849 & 0.6442 & 0.7888 & 0.6709  &  0.7013 \\
\midrule
\MODEL-atom       & 0.1 & 0.2266 (0.0742) & 0.8977 & 0.1849 & 32.9202 & 1.2423 & 1.8101 & 1.6688 & 1.5737 \\
\MODEL-atom       & 0.5 & 0.3164 (0.3086) & 0.9074 & 0.0454 & 32.4114 & 1.3512 & 1.8947 & 1.5299 & 1.6076 \\
\MODEL            & 0.1 & 0.9539 (0.9500) & 0.8644 & 0.9405 & 9.9854 & 0.3975 & 0.4720 & 0.4250 & 0.4315\\
\bottomrule
\end{tabular}%
}
\end{table*}

\paragraph{Generalization}
We further evaluate whether \MODEL can extend controllability to unseen property targets while keeping the generator frozen.
For two held-out DFT properties, Ei and EPS, we split each into a 
0.8/0.2 train/test split, using the training split to fit baselines 
and learn new task embeddings as convex combinations of the trained DFT task embeddings, e.g., $e_{\mathrm{Ei}}
=
0.68 e_{\mathrm{Eea}}
+
0.12 e_{\mathrm{Egb}}
+
0.20 e_{\mathrm{Egc}}.
$
As shown in \cref{tab:rq3_generalization}, \MODEL maintains high validity and achieves controllability competitive with baselines trained directly on these targets.
These results suggest that \MODEL captures structure-property knowledge 
that transfers across related targets, highlighting its potential for 
scalable and data-efficient extension to new properties.

\begin{table*}[t]
\centering
\small
\setlength{\tabcolsep}{5pt}
\caption{Generalization to unseen DFT targets (Ei, EPS). With frozen generator and task embeddings composed from existing tasks (Eea, Egb, Egc), \MODEL achieves competitive performance against task-specific baselines.}
\label{tab:rq3_generalization}
\resizebox{0.9\textwidth}{!}{%
\begin{tabular}{lccccccc}
\toprule
\multirow{2}{*}{Model}
& \multicolumn{1}{c}{Validity $\uparrow$}
& \multicolumn{3}{c}{Distribution Learning}
& \multicolumn{3}{c}{Condition Control} \\
\cmidrule(lr){3-5} \cmidrule(lr){6-8}
& (w/o rule checking)
& Diversity $\uparrow$
& Similarity $\uparrow$
& Distance $\downarrow$
& Ei $\downarrow$
& EPS $\downarrow$
& Avg. MAE $\downarrow$ \\
\midrule
LSTM-HC           & 0.9954 (N.A.) & 0.8294 & 0.9388 & 4.0644 &  1.0602 & 1.2078 &  1.1340    \\
DiGress           & 0.5460 (0.4630) & 0.9100 & 0.3590 & 15.3168 & 0.7400  & 0.8143 &  0.7772 \\
DeFoG           & 0.5624 (0.4948) & 0.8811 & 0.7238 & 7.4383 &  0.6921 & 0.6674 & 0.6768 \\
Graph DiT         & 0.6076 (0.7858) & 0.8654 & 0.9124 & 5.6182 & 0.4586 & 0.4868 & 0.4727  \\
\midrule
\MODEL (ours)      & 0.9182 (0.9037) & 0.8537 & 0.6420 & 15.5045 & 0.7231 & 0.7610 & 0.7421 \\
\bottomrule
\end{tabular}%
}
\end{table*}

\section{Related Work}
\label{sec:related}
\paragraph{Molecular Optimization and Inverse Design}
Traditional molecular optimization searches chemical space with genetic algorithms, Monte Carlo tree search, or Bayesian optimization, typically requiring repeated black-box oracle queries~\citep{jensen2019graphga,2015bayesian_opt,gao2022sample}.  
Recent graph generative models amortize this search by learning to sample molecules under target specifications, with diffusion and flow-based models such as GDSS, DiGress, Graph DiT, and DeFoG providing strong backbones for molecular graph generation~\citep{gdss,vignac2023digress,liu2024graphdit,qin2025defog}. 
In contrast, we target heterogeneous molecular inverse design and aim to bridge the controllability gap between existing methods and practical inverse-design needs.

\paragraph{Reinforcement Learning over Graph Generation}
Reinforcement learning provides a natural framework for aligning molecular generators with black-box objectives. 
Early methods optimize construction policies, such as fragment-based generation in FREED~\citep{yang2021freed}. 
Recent work such as GDPO studies policy-gradient optimization for graph diffusion and highlights the instability of directly optimizing atom-level reverse trajectories~\citep{liu2024gdpo}. 
Our method performs RL over motif-aware graph diffusion instead, using chemically meaningful actions to stabilize policy optimization while improving controllability.

\section{Conclusion}
\label{sec:conclusion}
In this work, we introduce \MODEL, controllable molecular generative foundation models for heterogeneous inverse design.
\MODEL turns pretrained graph-diffusion priors into controllable generation by optimizing conditional reverse policies via RL over motif-aware, chemically meaningful decisions.
We theoretically characterize the decision complexity of atom-level RL and justify motif-aware policy optimization.
Empirically, across three materials and drug-discovery benchmarks covering both numerical and categorical conditions, \MODEL ranks first in controllability on all nine targets, maintains validity above 0.94 without rule-based correction, and generalizes to unseen properties by updating only task embeddings.
These results suggest \MODEL as a practical foundation for unified, scalable, and controllable molecular inverse design across heterogeneous property tasks.

\section*{Acknowledgement}
This work was partially supported by NSF 2142827, 2146761, 2234058, 2332270, and 2341995 as well as AFOSR FA9550-25-1-0009. We also appreciate the support from the Foundation Models and Applications Lab at Notre Dame and ND-IBM Tech Ethics Lab.
We thank Gang Liu for valuable early discussions on reasoning over molecules, including the use of motif vocabularies in that context.
\bibliography{references}
\bibliographystyle{abbrvnat}


\appendix
\newpage
\section{Details on Model}
\label{app:additional_details}

\subsection{Model Architecture}
\label{app:model_architecture}

\paragraph{Motif-aware graph-level tokens}
We instantiate the denoiser as a motif-aware DiT-style transformer over padded motif-graph states
\(z_t=(X_t,E_t,P_t,m)\).
Following the graph-level token construction in DemoDiff~\citep{liu2025demodiff}, each transformer token corresponds to one motif slot and contains both its motif identity and its row-wise relation profile.
For an active motif slot \(i\), the initial token is
\[
u_i
=
g_X(X_{t,i})
\;\Vert\;
g_E(E_{t,i:})
\;\Vert\;
g_P(P_{t,i:}),
\]
where \(g_X\) is a motif-type embedding, and \(g_E,g_P\) are linear projections of the vectorized one-hot rows of inter-motif bond labels and directed attachment-position labels.
The fixed padded slot order is used when vectorizing \(E_{t,i:}\) and \(P_{t,i:}\), so the \(j\)-th block always corresponds to the relation from motif slot \(i\) to motif slot \(j\).
Thus, pair-specific topology is injected before self-attention rather than pooled into an unordered summary.
Null labels are used for non-bonded pairs and missing attachment positions, and entries involving padded slots are masked out.

\paragraph{Transformer backbone and conditioning}
The backbone is a stack of motif-slot self-attention blocks with DiT-style adaptive LayerNorm.
The condition consists of a task identity and a target value or label: the task identity is represented by a learned embedding, the target is encoded by a small MLP, and the two are fused into a condition vector.
This condition vector is added to the timestep embedding and used to modulate each transformer block.
For unconditional pretraining, the condition vector is omitted.
Since bond and attachment-position information is already included in the graph-level motif tokens, we do not add a separate edge bias to self-attention.

\paragraph{Output heads}
Let \(h_i\) be the final hidden representation of motif slot \(i\), split as
\[
h_i = h_i^X \Vert h_i^E \Vert h_i^P .
\]
The three parts predict motif identities, bond rows, and attachment-position rows through residual row-wise heads:
\[
\ell_i^X
=
W_X h_i^X+\mathrm{onehot}(X_{t,i}),
\]
\[
\widetilde L_i^E
=
\operatorname{reshape}(W_E h_i^E)+\mathrm{onehot}(E_{t,i:}),
\qquad
\ell_{ij}^E
=
\frac{1}{2}
\left(
\widetilde L_i^E[j]+\widetilde L_j^E[i]
\right),
\]
and
\[
L_i^P
=
\operatorname{reshape}(W_P h_i^P)+\mathrm{onehot}(P_{t,i:}),
\qquad
\ell_{ij}^P=L_i^P[j].
\]
The bond logits are symmetrized because \(E_{ij}=E_{ji}\), while the attachment-position logits remain directional because \(P_{ij}\) and \(P_{ji}\) encode different attachment positions.
All logits involving padded slots are masked out during training, sampling, and PPO log-probability computation.
The resulting logits parameterize the endpoint-state predictions \((\hat X_0,\hat E_0,\hat P_0)\), which are converted into reverse-step probabilities through the \(x_0\)-prediction posterior.
\subsection{Mask handling}
\label{app:mask_handling}

All main-text distributions are conditional on a fixed node mask \(m\). If \(m\) is not externally specified at sampling time, the full generative model can be written as
\begin{equation}
\label{eq:mask_prior_model}
p_\theta(z_{0:T},m\mid c)
=
\nu(m)\,p(z_T\mid m)\prod_{t=1}^{T}
\widetilde p_\theta(z_{t-1}\mid z_t,t,c,m),
\end{equation}
where \(\nu\) is a prior over masks (equivalently, over graph sizes). The main text suppresses this conditioning for readability.

\subsection{Masked denoising loss}
\label{app:masked_ce}
Let \(N_{\max}\) denote the fixed maximum number of motif nodes after padding.
For a mask \(m\), define the active node, undirected pair, and directed pair index sets
\[
\Omega_X(m)=\{i\in[N_{\max}] : m_i=1\},
\]
\[
\Omega_E(m)=\{(i,j):1\le i<j\le N_{\max},\; m_i m_j=1\},
\]
and
\[
\Omega_P(m)=\{(i,j):i\neq j,\; m_i m_j=1\}.
\]
Here \(\Omega_E(m)\) indexes undirected inter-motif bond variables, while
\(\Omega_P(m)\) indexes directional attachment-position variables. For directed
pairs without an attachment, \(P_{ij}\) is assigned a null attachment-position label.

Then
\begin{align}
\mathrm{CE}_X
&=
-\frac{1}{|\Omega_X(m)|}
\sum_{i\in\Omega_X(m)}
\sum_{a=1}^{d_X}
X_{0,ia}\log \hat X_{0,ia},
\\
\mathrm{CE}_E
&=
-\frac{1}{|\Omega_E(m)|}
\sum_{(i,j)\in\Omega_E(m)}
\sum_{b=1}^{d_E}
E_{0,ijb}\log \hat E_{0,ijb},
\\
\mathrm{CE}_P
&=
-\frac{1}{|\Omega_P(m)|}
\sum_{(i,j)\in\Omega_P(m)}
\sum_{p=1}^{d_P}
P_{0,ijp}\log \hat P_{0,ijp}.
\end{align}
If \(\Omega_E(m)\) or \(\Omega_P(m)\) is empty, the corresponding term is defined as \(0\).

\subsection{Factorized reverse kernel and one-step log-probability}
\label{app:factorized_logprob}

Conditioned on \(z_t\), \(t\), \(c\), and the fixed mask \(m\), we use a factorized categorical reverse kernel
\begin{align}
\widetilde p_\theta(z_{t-1}\mid z_t,t,c,m)
&=
\prod_{i\in\Omega_X(m)}
p_{\theta,t}^{X}\!\left(X_{t-1,i}\mid z_t,c,m\right)
\prod_{(i,j)\in\Omega_E(m)}
p_{\theta,t}^{E}\!\left(E_{t-1,ij}\mid z_t,c,m\right)
\nonumber\\
&\qquad\times
\prod_{(i,j)\in\Omega_P(m)}
p_{\theta,t}^{P}\!\left(P_{t-1,ij}\mid z_t,c,m\right).
\label{eq:factorized_reverse}
\end{align}
The \(P\)-factor is defined over all active directed motif pairs.
For pairs without an attachment, the corresponding categorical outcome is the null class \(0\).

Accordingly, for \(a_h=z_{t-1}\) with \(t=T-h\), the one-step log-probability used in PPO is
\begin{align}
\log \pi_\theta(a_h\mid s_h)
&=
\sum_{i\in\Omega_X(m)}
\log p_{\theta,t}^{X}\!\left(X_{t-1,i}\mid z_t,c,m\right)
+
\sum_{(i,j)\in\Omega_E(m)}
\log p_{\theta,t}^{E}\!\left(E_{t-1,ij}\mid z_t,c,m\right)
\nonumber\\
&\qquad+
\sum_{(i,j)\in\Omega_P(m)}
\log p_{\theta,t}^{P}\!\left(P_{t-1,ij}\mid z_t,c,m\right).
\label{eq:factorized_logprob_eq}
\end{align}
If the network predicts endpoint-state distributions \((\hat X_0,\hat E_0,\hat P_0)\), the factors above denote the corresponding \(z_{t-1}\)-probabilities induced by the reverse posterior parameterization.

\subsection{PPO optimization details}
\label{app:ppo_details}

\paragraph{Design rationale.}
The KL-regularized objective in \cref{eq:regularized_control_objective}
characterizes the desired policy improvement at the population level.
In translating this objective into a practical PPO implementation, we use a
standard surrogate decomposition: the critic is trained only against the
terminal task reward, while KL regularization toward the frozen SFT reference
policy is imposed as an explicit actor-side penalty.
Specifically, the Monte Carlo return used for value learning is
\[
G_h = R(z_0;c), \qquad h=0,\ldots,T-1.
\]
This keeps the value target tied to the fixed terminal reward function rather
than to a policy-dependent KL cost that would change as the actor is updated.
The KL coefficient \(c_{\mathrm{kl}}\) in the actor loss therefore plays the
empirical role of the regularization strength in the population objective,
although it is not in one-to-one correspondence with \(\beta\) because PPO uses
clipped importance ratios, finite on-policy batches, and normalized advantages.
Thus, the implemented PPO update is best interpreted as a tractable
regularized policy-improvement surrogate for \cref{eq:regularized_control_objective},
prioritizing stable value estimation and computational feasibility over exact dynamic-programming correspondence.

We initialize RL from the conditional diffusion model after SFT and denote the frozen reference policy by
\begin{equation}
\label{eq:pi0_def}
\pi_{\mathrm{ref}}:=\pi_{\theta_{\mathrm{ref}}}.
\end{equation}
Let \(\theta_{\mathrm{old}}\) denote the behavior-policy parameters used to collect the current on-policy batch.

With a value estimator \(V_\psi(s_h)\), the advantage is
\begin{equation}
\label{eq:advantage_def}
A_h
=
G_h-V_\psi(s_h).
\end{equation}

The PPO importance ratio is
\begin{equation}
\label{eq:ppo_ratio}
\rho_h
=
\frac{\pi_\theta(a_h\mid s_h)}{\pi_{\theta_{\mathrm{old}}}(a_h\mid s_h)}.
\end{equation}
The clipped actor loss, corresponding to the negative of the clipped surrogate in \cref{eq:ppo_clip_main}, is
\begin{equation}
\label{eq:ppo_clip}
\mathcal L_{\mathrm{clip}}
=
-\frac{1}{T}\sum_{h=0}^{T-1}
\mathbb E\!\left[
\min\!\Bigl(
\rho_h A_h,\;
\operatorname{clip}(\rho_h,1-\epsilon,1+\epsilon)A_h
\Bigr)
\right].
\end{equation}

The value loss, entropy bonus, and KL regularization toward the frozen reference policy are
\begin{equation}
\label{eq:ppo_value}
\mathcal L_{\mathrm{value}}
=
\frac{1}{T}\sum_{h=0}^{T-1}
\mathbb E\!\left[
\bigl(V_\psi(s_h)-G_h\bigr)^2
\right],
\end{equation}
\begin{equation}
\label{eq:ppo_ent}
\mathcal L_{\mathrm{ent}}
=
\frac{1}{T}\sum_{h=0}^{T-1}
\mathbb E\!\left[
\mathcal H\bigl(\pi_\theta(\cdot\mid s_h)\bigr)
\right],
\end{equation}
\begin{equation}
\label{eq:ppo_kl}
\mathcal L_{\mathrm{KL}}
=
\frac{1}{T}\sum_{h=0}^{T-1}
\mathbb E\!\left[
\mathrm{KL}\!\left(
\pi_\theta(\cdot\mid s_h)\,\|\,\pi_{\mathrm{ref}}(\cdot\mid s_h)
\right)
\right].
\end{equation}

The total PPO loss minimized in practice is
\begin{equation}
\label{eq:ppo_total_loss}
\mathcal L_{\mathrm{PPO}}
=
\mathcal L_{\mathrm{clip}}
+
c_v\,\mathcal L_{\mathrm{value}}
-
c_e\,\mathcal L_{\mathrm{ent}}
+
c_{\mathrm{kl}}\,\mathcal L_{\mathrm{KL}},
\end{equation}
where \(\epsilon>0\) is the clipping radius and \(c_v,c_e,c_{\mathrm{kl}}\ge 0\) are loss weights.
All expectations are taken over \(c\sim\mu_{\mathrm{RL}}\) and trajectories \(\tau\sim p_{\pi_{\theta_{\mathrm{old}}}}(\cdot\mid c)\).
In practice, advantages are normalized within each PPO batch.

\newpage
\section{Details on Theory}
\label{app:theory_proofs}

Throughout this appendix, we condition on a fixed target condition \(c\) and a fixed mask \(m\), with \(m\) suppressed in the main notation.
We denote by \(\pi_{\mathrm{ref}}\) the frozen reverse policy induced by the SFT checkpoint, which serves as the KL reference policy during PPO.
All state, action, and trajectory spaces are finite because \(N_{\max}\), \(d_X\), \(d_E\), \(d_P\), and \(T\) are finite.
If the mask is sampled, the same derivations apply after treating \(m\) as part of the state.
All policies are assumed to be absolutely continuous with respect to the reference policy at each state, i.e.,
\[
\pi(\cdot\mid s)\ll \pi_{\mathrm{ref}}(\cdot\mid s),
\]
so that the KL terms are well-defined.

\subsection{A Gibbs variational identity used in the Bellman proof}
\label{app:gibbs_identity}

\begin{lemma}[Gibbs variational identity]
\label{lem:gibbs_identity}
Let \(\mathcal A\) be a finite action set, let \(\pi_{\mathrm{ref}}\) be a probability distribution on \(\mathcal A\), let \(Q:\mathcal A\to\mathbb R\), and let \(\beta>0\). 
Then
\begin{equation}
\label{eq:gibbs_identity}
\sup_{\pi\ll \pi_{\mathrm{ref}}}
\left\{
\sum_{a\in\mathcal A}\pi(a)Q(a)
-
\beta\,\mathrm{KL}(\pi\,\|\,\pi_{\mathrm{ref}})
\right\}
=
\beta
\log
\sum_{a\in\mathcal A}
\pi_{\mathrm{ref}}(a)
\exp(Q(a)/\beta).
\end{equation}
The unique maximizer is
\begin{equation}
\label{eq:gibbs_policy}
\pi^\star(a)
=
\frac{
\pi_{\mathrm{ref}}(a)\exp(Q(a)/\beta)
}{
\sum_{a'\in\mathcal A}
\pi_{\mathrm{ref}}(a')\exp(Q(a')/\beta)
}.
\end{equation}
\end{lemma}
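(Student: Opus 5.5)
The plan is to prove the identity by rewriting the objective as a single negative KL divergence to the Gibbs distribution \(\pi^\star\) defined in \cref{eq:gibbs_policy}. Let \(Z:=\sum_{a'}\pi_{\mathrm{ref}}(a')\exp(Q(a')/\beta)\). Since \(\mathcal A\) is finite, \(Q\) is real-valued, and \(\pi_{\mathrm{ref}}\) has at least one atom of positive mass, \(Z\in(0,\infty)\). Hence \(\pi^\star\) is a well-defined probability distribution, and it has exactly the same support as \(\pi_{\mathrm{ref}}\). In particular \(\pi^\star\ll\pi_{\mathrm{ref}}\), so it is admissible in the supremum.

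Next fix any \(\pi\ll\pi_{\mathrm{ref}}\) and restrict all sums to \(\mathrm{supp}(\pi)\subseteq\mathrm{supp}(\pi_{\mathrm{ref}})\), with the convention \(0\log 0=0\). On this support we have \(\log\pi_{\mathrm{ref}}(a)=\log\pi^\star(a)-Q(a)/\beta+\log Z\). Substituting this into \(\mathrm{KL}(\pi\|\pi_{\mathrm{ref}})=\sum_a\pi(a)\log(\pi(a)/\pi_{\mathrm{ref}}(a))\) gives
\[
\sum_a\pi(a)Q(a)-\beta\,\mathrm{KL}(\pi\,\|\,\pi_{\mathrm{ref}})
=\beta\log Z-\beta\,\mathrm{KL}(\pi\,\|\,\pi^\star).
\]
The \(\sum_a\pi(a)Q(a)\) terms cancel exactly, and \(\sum_a\pi(a)=1\) turns the \(\log Z\) term into the constant \(\beta\log Z\).

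Finally apply Gibbs' inequality: \(\mathrm{KL}(\pi\|\pi^\star)\ge 0\), with equality if and only if \(\pi=\pi^\star\). Therefore every admissible \(\pi\) attains at most \(\beta\log Z\), and \(\pi^\star\) attains exactly this value. So the supremum is a maximum, it equals the right-hand side of \cref{eq:gibbs_identity}, and the maximizer is unique.

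The only delicate step is the support bookkeeping. The identity for \(\log\pi_{\mathrm{ref}}\) is used only where \(\pi(a)>0\), which is where absolute continuity is needed. The KL to \(\pi^\star\) is finite because \(\pi^\star\) and \(\pi_{\mathrm{ref}}\) share the same support. Actions outside \(\mathrm{supp}(\pi_{\mathrm{ref}})\) contribute zero to every sum, so they do not affect the argument.
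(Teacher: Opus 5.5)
Your proposal is correct and follows essentially the same route as the paper. Both rewrite the objective as \(\beta\log Z-\beta\,\mathrm{KL}(\pi\,\|\,\pi^\star)\) and conclude from nonnegativity of KL, with equality only at \(\pi=\pi^\star\); your support bookkeeping (positivity of \(Z\), and \(\pi^\star\) sharing the support of \(\pi_{\mathrm{ref}}\)) makes explicit details the paper leaves implicit.
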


\begin{proof}
Define
\[
Z:=
\sum_{a\in\mathcal A}
\pi_{\mathrm{ref}}(a)\exp(Q(a)/\beta)
\]
and
\[
\pi^\star(a)
=
\frac{\pi_{\mathrm{ref}}(a)\exp(Q(a)/\beta)}{Z}.
\]
For any \(\pi\ll \pi_{\mathrm{ref}}\),
\begin{align*}
\mathrm{KL}(\pi\,\|\,\pi^\star)
&=
\sum_a
\pi(a)
\log
\frac{\pi(a)}{\pi^\star(a)}
\\
&=
\sum_a
\pi(a)
\log
\frac{\pi(a)Z}{\pi_{\mathrm{ref}}(a)\exp(Q(a)/\beta)}
\\
&=
\log Z
-
\frac{1}{\beta}
\sum_a \pi(a)Q(a)
+
\mathrm{KL}(\pi\,\|\,\pi_{\mathrm{ref}}).
\end{align*}
Rearranging gives
\[
\sum_a \pi(a)Q(a)
-
\beta\,\mathrm{KL}(\pi\,\|\,\pi_{\mathrm{ref}})
=
\beta\log Z
-
\beta\,\mathrm{KL}(\pi\,\|\,\pi^\star).
\]
The right-hand side is maximized uniquely when \(\pi=\pi^\star\), which proves the claim.
\end{proof}

\subsection{Proof of Proposition~\ref{prop:soft_bellman}}
\label{app:soft_bellman_proof}

\begin{proof}
We prove the result by backward induction.
At the terminal state \(s_T=(z_0,0,c)\), the value is
\[
V_T^\star(s_T)=R(z_0;c).
\]
Assume \(V_{h+1}^\star\) gives the optimal regularized value from step \(h+1\) onward.
At state \(s=s_h\), if the policy chooses a distribution \(\pi_h(\cdot\mid s)\), then the one-step regularized objective is
\[
\sum_{a\in\mathcal A(s)}
\pi_h(a\mid s)
Q_h^\star(s,a)
-
\beta\,
\mathrm{KL}\!\left(
\pi_h(\cdot\mid s)
\,\|\,
\pi_{\mathrm{ref}}(\cdot\mid s)
\right),
\]
where, because the MDP state update is deterministic,
\[
Q_h^\star(s,a)=V_{h+1}^\star(s'),
\qquad
s'=(a,T-h-1,c).
\]
Applying Lemma~\ref{lem:gibbs_identity} with \(Q(a)=Q_h^\star(s,a)\) yields
\[
V_h^\star(s)
=
\beta
\log
\sum_{a\in\mathcal A(s)}
\pi_{\mathrm{ref}}(a\mid s)
\exp(Q_h^\star(s,a)/\beta),
\]
and the unique maximizing policy is
\[
\pi_h^\star(a\mid s)
=
\pi_{\mathrm{ref}}(a\mid s)
\exp\!\left(
\frac{Q_h^\star(s,a)-V_h^\star(s)}{\beta}
\right).
\]
This proves the Bellman recursion and the policy form.
Taking expectation over the initial noise \(z_T\sim p(\cdot)\) gives the optimal value of \(\mathcal J_\beta(\pi;c)\).
\end{proof}

\subsection{Proof of Corollary~\ref{cor:bellman_amplification}}
\label{app:bellman_amplification_proof}

\begin{proof}
By \cref{eq:soft_opt_policy},
\[
\pi_h^\star(\mathcal G\mid s)
=
\frac{
\sum_{a\in\mathcal G}
\pi_{\mathrm{ref}}(a\mid s)\exp(Q_h^\star(s,a)/\beta)
}{
\sum_{a\in\mathcal A(s)}
\pi_{\mathrm{ref}}(a\mid s)\exp(Q_h^\star(s,a)/\beta)
}.
\]
Let \(p_G=\pi_{\mathrm{ref}}(\mathcal G\mid s)\).
Using the assumptions
\[
Q_h^\star(s,a)\ge b+\Delta
\quad (a\in\mathcal G),
\qquad
Q_h^\star(s,a)\le b
\quad (a\notin\mathcal G),
\]
we obtain
\[
\sum_{a\in\mathcal G}
\pi_{\mathrm{ref}}(a\mid s)\exp(Q_h^\star(s,a)/\beta)
\ge
e^{(b+\Delta)/\beta}p_G,
\]
and
\[
\sum_{a\notin\mathcal G}
\pi_{\mathrm{ref}}(a\mid s)\exp(Q_h^\star(s,a)/\beta)
\le
e^{b/\beta}(1-p_G).
\]
Therefore
\[
\pi_h^\star(\mathcal G\mid s)
\ge
\frac{
e^{(b+\Delta)/\beta}p_G
}{
e^{(b+\Delta)/\beta}p_G
+
e^{b/\beta}(1-p_G)
}
=
\frac{
e^{\Delta/\beta}p_G
}{
e^{\Delta/\beta}p_G+1-p_G
}.
\]
\end{proof}

\subsection{Proof of Proposition~\ref{prop:factorized_kl}}
\label{app:factorized_kl_proof}

\begin{proof}
For compactness, omit the representation superscript \(r\) and the state \(s\). 
Let
\[
\pi(a)=\prod_{j=1}^{M}\pi_j(a_j\mid a_{<j}),
\qquad
\pi_{\mathrm{ref}}(a)=\prod_{j=1}^{M}\pi_{\mathrm{ref},j}(a_j\mid a_{<j}).
\]
Then
\[
\log
\frac{\pi(a)}{\pi_{\mathrm{ref}}(a)}
=
\sum_{j=1}^{M}
\log
\frac{
\pi_j(a_j\mid a_{<j})
}{
\pi_{\mathrm{ref},j}(a_j\mid a_{<j})
}.
\]
Taking expectation under \(a\sim \pi\),
\begin{align*}
\mathrm{KL}(\pi\,\|\,\pi_{\mathrm{ref}})
&=
\mathbb E_{a\sim \pi}
\left[
\sum_{j=1}^{M}
\log
\frac{
\pi_j(a_j\mid a_{<j})
}{
\pi_{\mathrm{ref},j}(a_j\mid a_{<j})
}
\right]
\\
&=
\sum_{j=1}^{M}
\mathbb E_{a_{<j}\sim \pi}
\left[
\sum_{a_j}
\pi_j(a_j\mid a_{<j})
\log
\frac{
\pi_j(a_j\mid a_{<j})
}{
\pi_{\mathrm{ref},j}(a_j\mid a_{<j})
}
\right]
\\
&=
\sum_{j=1}^{M}
\mathbb E_{a_{<j}\sim \pi}
\left[
\mathrm{KL}\!\left(
\pi_j(\cdot\mid a_{<j})
\,\|\,
\pi_{\mathrm{ref},j}(\cdot\mid a_{<j})
\right)
\right].
\end{align*}
Restoring \(s\) and \(r\) gives \cref{eq:factorized_kl}.
\end{proof}

\subsection{Proof of Corollary~\ref{cor:kl_budget_dilution}}
\label{app:kl_budget_dilution_proof}

\begin{proof}
By Proposition~\ref{prop:factorized_kl},
\[
\mathrm{KL}\!\left(
\pi^{(r)}(\cdot\mid s)
\,\|\,
\pi^{(r)}_{\mathrm{ref}}(\cdot\mid s)
\right)
=
\sum_{j=1}^{M_r(s)}
D_j(s),
\]
where
\[
D_j(s)
:=
\mathbb E_{a_{<j}\sim \pi^{(r)}}
\left[
\mathrm{KL}\!\left(
\pi^{(r)}_j(\cdot\mid s,a_{<j})
\,\|\,
\pi^{(r)}_{\mathrm{ref},j}(\cdot\mid s,a_{<j})
\right)
\right].
\]
If the left-hand side is at most \(\eta\), then
\[
\frac{1}{M_r(s)}
\sum_{j=1}^{M_r(s)}
D_j(s)
\le
\frac{\eta}{M_r(s)}.
\]
\end{proof}

\subsection{Proof of Corollary~\ref{cor:motif_factor_reduction}}
\label{app:motif_ratio_proof}

\begin{proof}
By definition,
\[
L_{\mathrm{atom}}(n)
=
n+\binom{n}{2}
=
\frac{n^2+n}{2},
\]
and
\[
L_{\mathrm{motif}}(n)
=
n+\binom{n}{2}+n(n-1)
=
\frac{3n^2-n}{2}.
\]
Therefore
\[
\frac{
L_{\mathrm{motif}}(n_{\mathrm{motif}}(x))
}{
L_{\mathrm{atom}}(n_{\mathrm{atom}}(x))
}
=
\frac{
3n_{\mathrm{motif}}(x)^2-n_{\mathrm{motif}}(x)
}{
n_{\mathrm{atom}}(x)^2+n_{\mathrm{atom}}(x)
}.
\]
Let
\[
\chi(x)=\frac{n_{\mathrm{atom}}(x)}{n_{\mathrm{motif}}(x)}.
\]
Since
\[
3n_{\mathrm{motif}}(x)^2-n_{\mathrm{motif}}(x)
\le
3n_{\mathrm{motif}}(x)^2
\]
and
\[
n_{\mathrm{atom}}(x)^2+n_{\mathrm{atom}}(x)
\ge
n_{\mathrm{atom}}(x)^2,
\]
we obtain
\[
\frac{
L_{\mathrm{motif}}(n_{\mathrm{motif}}(x))
}{
L_{\mathrm{atom}}(n_{\mathrm{atom}}(x))
}
\le
\frac{
3n_{\mathrm{motif}}(x)^2
}{
n_{\mathrm{atom}}(x)^2
}
=
\frac{3}{\chi(x)^2}.
\]
\end{proof}

\newpage
\section{Details on Data}
\label{app:data_detials}
\subsection{Datasets}
\label{app:dataset}

For all random sampling, we set random seed to 42.
\subsubsection{Pretraining Datasets}
\label{app:pretrain_data}

For unconditional pretraining (PT), we curate two unlabeled datasets.
\begin{itemize}
    \item Polymer: The polymer corpus contains 12,792 unique real-world polymers.
    \item Small molecule: We curate a 10,000-molecule unlabeled corpus from MoleculeNet~\citep{wu2018moleculenet} by sampling valid and unique SMILES from datasets spanning physical chemistry, biophysics, and physiology.
\end{itemize}
No target labels or task identifiers are used during this stage, so PT remains fully unconditional. This design provides a broad structural prior before adapting the model to heterogeneous downstream molecular tasks.

We report RDKit-derived structural statistics in \cref{tab:pretrain_rdkit_stats}. 
We further visualize the real-atom and ring-count distributions in \cref{fig:pretrain_dataset}.

\begin{table}[t]
\centering
\small
\setlength{\tabcolsep}{5pt}
\caption{
Structural statistics of the unconditional pretraining datasets. 
Values are reported as mean / maximum. 
}
\label{tab:pretrain_rdkit_stats}
\begin{tabular}{lrrrrrr}
\toprule
Dataset 
& \#SMILES 
& Valid (\%) 
& Real atoms 
& Hetero atoms 
& Rings 
& Mol. weight \\
\midrule
Polymer  
& 12,792 
& 100.0 
& 30.7 / 80 
& 6.2 / 40 
& 3.2 / 15 
& 424.5 / 1282.5 \\
Molecule 
& 10,000 
& 100.0 
& 20.9 / 50 
& 5.7 / 27 
& 2.5 / 15 
& 303.8 / 1295.7 \\
\bottomrule
\end{tabular}
\end{table}

\begin{figure}[t]
\centering
\includegraphics[width=1\linewidth]{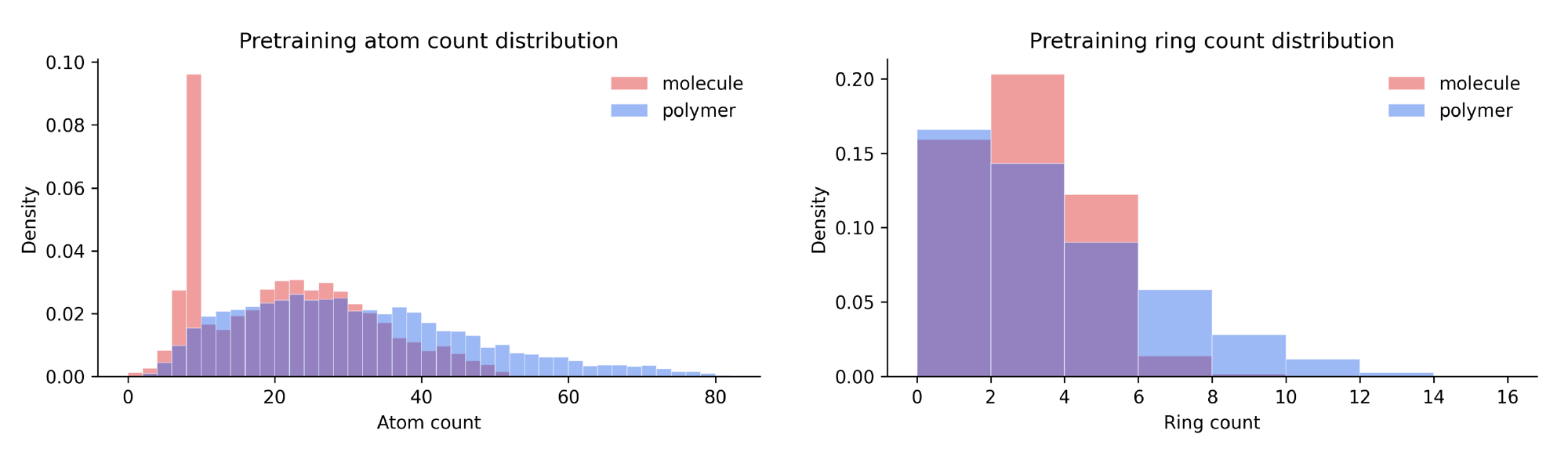}
\caption{Atom and ring count for pretraining datasets.}
\label{fig:pretrain_dataset}
\end{figure}

\subsubsection{Post-training Datasets}
\label{app:posttrain_data}
For SFT, RL, and evaluation, we construct three target-conditioned task sets, with statistics summarized in \cref{tab:posttrain_task_stats}.
The polymer DFT set contains electronic-structure properties relevant to functional polymer design, including electron affinity (Eea), bulk band gap (Egb), and chain band gap (Egc) for training, with ionization energy (Ei) and dielectric constant (EPS) held out as unseen targets~\citep{Xu2023TransPolymer}.
The polymer gas set includes O$_2$, N$_2$, and CO$_2$ permeability targets for membrane materials~\citep{gaspolymer2012}. All gas-permeability values are transformed into log space following previous work~\citep{ma2020pi1m}.
The drug set spans physical chemistry, target bioactivity, and physiological permeability through the numerical FreeSolv dataset, and two class-balanced datasets BACE and BBBP from~\citep{wu2018moleculenet}. 
\begin{table*}[t]
\centering
\small
\setlength{\tabcolsep}{4pt}
\caption{
Task definitions and raw statistics for post-training datasets.
Target ranges are reported in the original property units before any target transformation.
}
\label{tab:posttrain_task_stats}
\resizebox{0.9\textwidth}{!}{%
\begin{tabular}{llp{5.4cm}lrc}
\toprule
Benchmark & Task & Description & Type & \#SMILES & Target range \\
\midrule
Polymer DFT 
& Eea & DFT-computed electron affinity 
& Reg. & 368 & 0.394--5.18 \\
& Egb & DFT-computed bulk band gap 
& Reg. & 561 & 0.391--10.1 \\
& Egc & DFT-computed chain band gap 
& Reg. & 3,380 & 0.0205--9.86 \\
\midrule
Polymer gas 
& O$_2$Perm & Oxygen permeability of polymer membranes 
& Reg. & 443 & 0.005--$1.87{\times}10^4$ \\
& N$_2$Perm & Nitrogen permeability of polymer membranes 
& Reg. & 443 & 0.001--$1.66{\times}10^4$ \\
& CO$_2$Perm & Carbon dioxide permeability of polymer membranes 
& Reg. & 443 & 0.029--$4.70{\times}10^4$ \\
\midrule
Drug 
& FreeSolv & Hydration free energy of small molecules in water 
& Reg. & 642 & -25.5--3.43 \\
& BACE & Inhibition label for human $\beta$-secretase 1 
& Cls. & 1,332 & 0/1 \\
& BBBP & Blood--brain barrier permeability label 
& Cls. & 872 & 0/1 \\
\midrule
Generalization
& Ei & DFT-computed ionization energy 
& Reg. & 370 & 3.56--9.84 \\
& EPS & DFT-computed dielectric constant 
& Reg. & 382 & 2.61--9.09 \\
\bottomrule
\end{tabular}%
}
\end{table*}


We visualize task-set-level target distributions in \cref{fig:posttrain_dataset}. 
Regression tasks are shown as target-value histograms, while classification tasks are shown by class counts. 
\begin{figure}[t]
\centering
\includegraphics[width=1\linewidth]{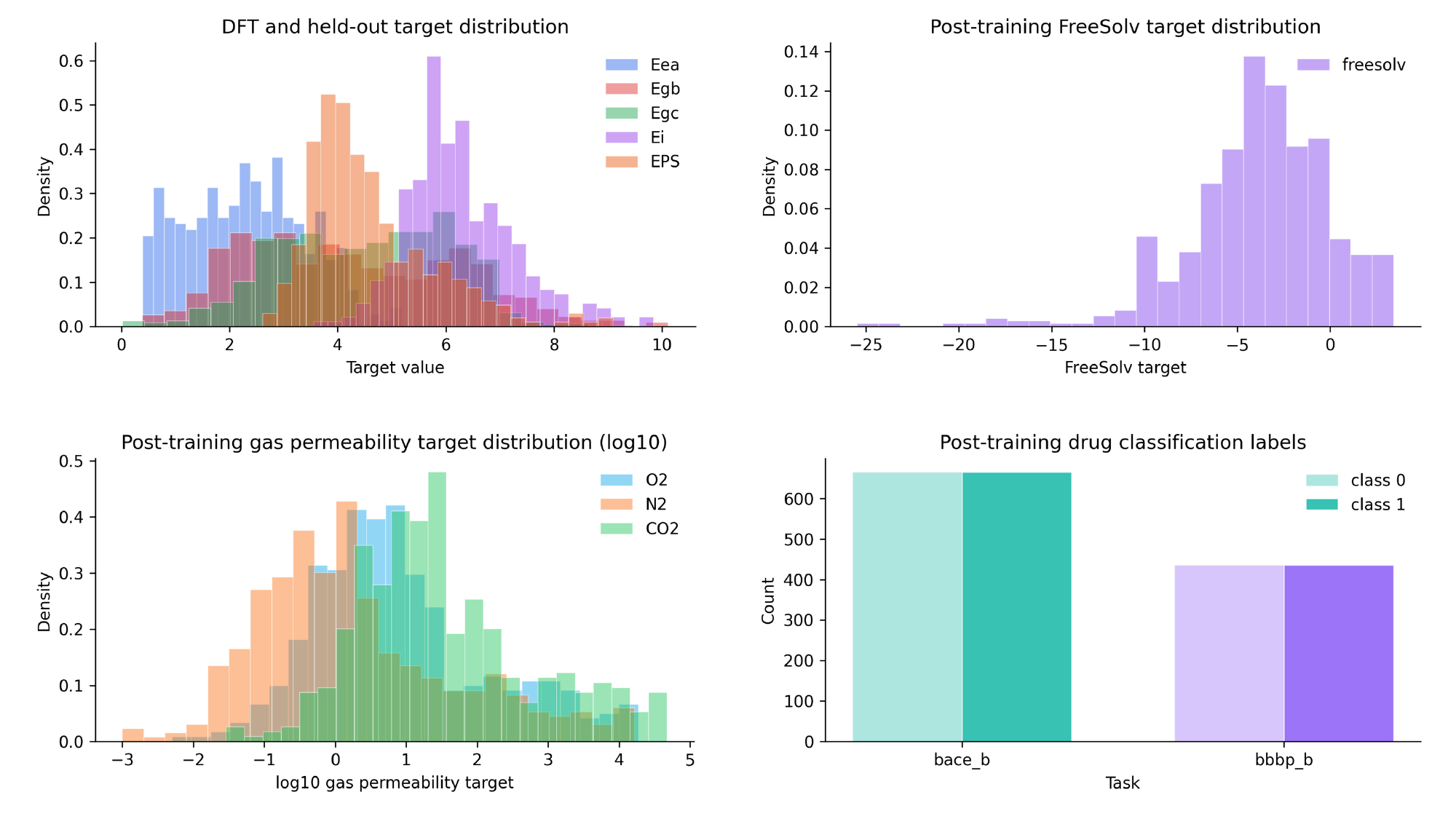}
\caption{Target distributions for conditional training and evaluation datasets.}
\label{fig:posttrain_dataset}
\end{figure}

For RL post-training, we use the same data sources as SFT. Within each task set, we randomly subsample each task’s training split to match the smallest training-set size among the tasks in that set.
The validation and test splits are kept fixed between SFT and RL, ensuring that both stages are evaluated on the same held-out conditions.

\subsection{Tokenizer Preparation}
We learn domain-specific tokenizers for the polymer and molecule domains using the same default configuration, V1000-R80, where V denotes the total motif vocabulary size and R denotes the ring vocabulary size.
\label{app:motif_tokenizer}

\subsubsection{Vocabulary Learning}
We construct the motif tokenizer using the data-driven Node Pair Encoding (NPE) algorithm of DemoDiff~\citep{liu2025demodiff}.
Each molecule is represented as a graph of atom-disjoint connected units that cover the original atom-level graph.
Inter-unit edges store the bond type and the attachment positions on both endpoint units, enabling lossless reconstruction to the original atom-level molecular graph.

Vocabulary learning starts from singleton atom tokens, including atom types observed in the pretraining corpus and the polymerization symbol ``*''.
For constrained NPE, we further enumerate frequent maximal ring units and add the top-\(R\) rings to the initial vocabulary, treating them as indivisible units during later merging to avoid ambiguous ring attachments.
NPE then repeatedly counts adjacent unit pairs in the tokenized training graphs, merges the most frequent pair into a new vocabulary unit, and retokenizes the graphs until the target vocabulary size is reached.

\subsubsection{Vocabulary Size Selection}
We choose the tokenizer configuration based on two criteria: motif-occurrence coverage and graph-size compression.
\paragraph{Motif-occurrence Coverage.} 
Motif-occurrence coverage measures the fraction of token occurrences explained by the top-\(k\) most frequent motifs after tokenization.
As shown in \cref{fig:tokenizer_coverage}, V1000-R80 provides the strongest coverage efficiency across polymer and molecule pretraining datasets, especially under small and medium retained-token budgets.
Larger vocabularies distribute occurrences over more rare motifs, reducing top-\(k\) coverage at the same retained vocabulary size.

\begin{figure}[t]
\centering
\includegraphics[width=\linewidth]{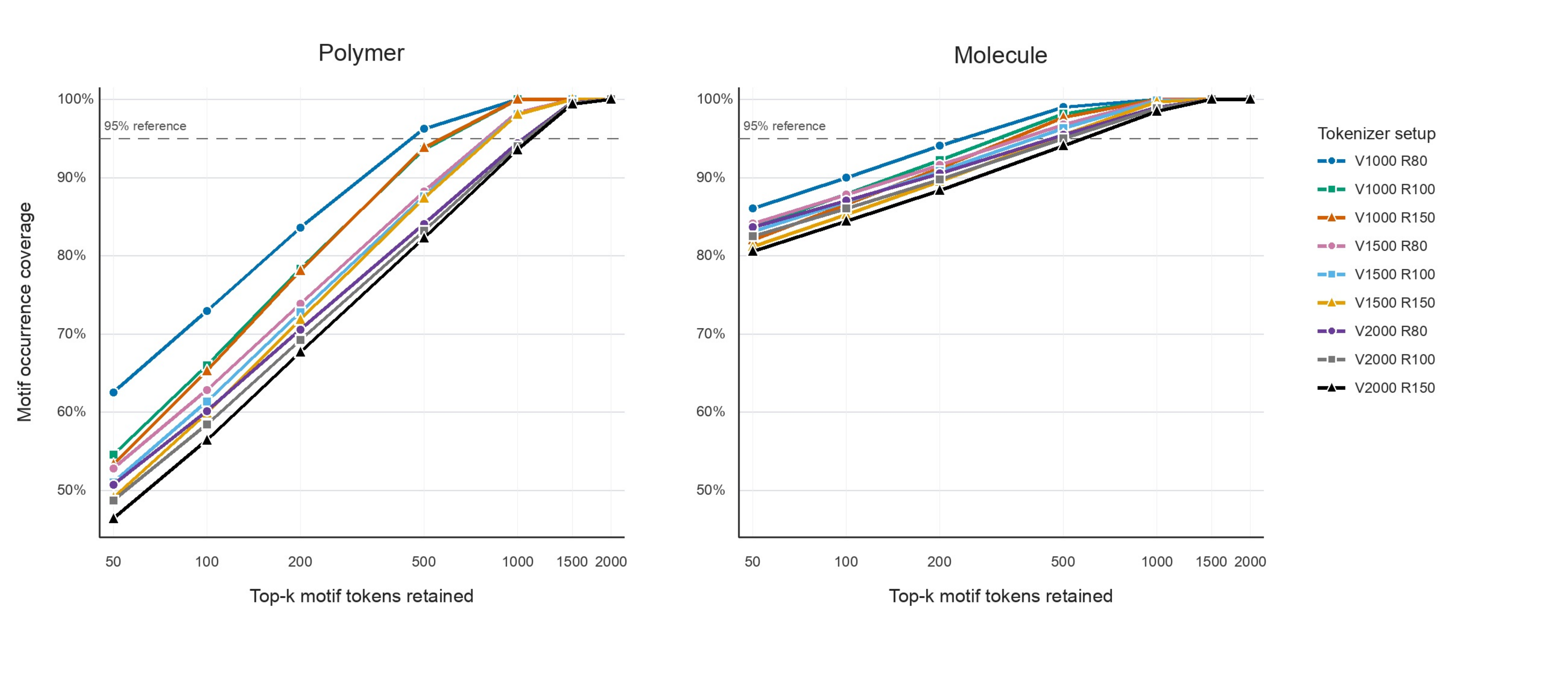}
\caption{
Motif-occurrence coverage under different tokenizer configurations.
Coverage is the fraction of token occurrences captured by the top-\(k\) most frequent motifs after tokenization.
V denotes the total motif vocabulary size, and R denotes the ring vocabulary size.
The dashed line marks the 95\% coverage reference.
}
\label{fig:tokenizer_coverage}
\end{figure}

\paragraph{Graph-size Compression.} 
We further evaluate graph-size compression in \cref{tab:node_compression_filtered}. 
V1000-R80 reduces polymer graphs from 32.71 atom nodes, including dummy wildcard attachment atoms, to 5.85 motif nodes on average, and molecule graphs from 20.87 atom nodes to 9.22 motif nodes.
This corresponds to an \(82.1\%\) node reduction for polymers and a \(55.8\%\) reduction for molecules.
Although larger vocabularies or ring vocabularies can further reduce graph size, V1000-R80 offers a better coverage-compression trade-off.
The weaker compression on small molecules is consistent with the more modest gains observed on the drug-related molecular task set in \cref{tab:main_results_mol}.

\begin{table}[t]
\centering
\small
\caption{
Average graph-size compression under representative tokenizer configurations.
Compression is the ratio of average atom nodes to average motif nodes, and reduction is the corresponding percentage decrease in node count.
}
\label{tab:node_compression_filtered}
\resizebox{0.8\textwidth}{!}{%
\begin{tabular}{llrrrr}
\toprule
Dataset & Tokenizer & Avg. atom nodes & Avg. motif nodes & Compression & Reduction \\
\midrule
\multirow{3}{*}{Polymer}
& V1000-R80  & 32.71 & 5.85 & 5.59$\times$ & 82.1\% \\
& V1000-R100 & 32.71 & 5.14 & 6.37$\times$ & 84.3\% \\
& V1500-R80  & 32.71 & 4.90 & 6.68$\times$ & 85.0\% \\
\midrule
\multirow{3}{*}{Molecule}
& V1000-R80  & 20.87 & 9.22 & 2.26$\times$ & 55.8\% \\
& V1000-R100 & 20.87 & 8.39 & 2.49$\times$ & 59.8\% \\
& V1500-R80  & 20.87 & 8.50 & 2.45$\times$ & 59.3\% \\
\bottomrule
\end{tabular}
}
\end{table}

We therefore use V1000-R80 as the default tokenizer configuration. 
It provides the best occurrence-coverage efficiency while still achieving substantial graph compression, offering a practical balance between compactness, motif expressiveness, and stable training.

\newpage
\section{Details on Experiments}
\label{app:experiments}

\paragraph{Hardware}
Each training or evaluation run can be executed on a single NVIDIA A100 GPU, no distributed training is required.

\subsection{Training Details}
\label{app:training_details}

All \MODEL variants use the same approximately 0.15B-parameter backbone, with transformer depth 12, hidden size 1024, 16 attention heads, and MLP ratio 4. 
We use $T = 500$ diffusion steps.
We train one \MODEL instance for each task set described in \cref{app:posttrain_data}.
Each instance follows the same three-stage pipeline: PT, SFT, and PPO-based RL.

We use a consistent set of training hyperparameters across task sets:
\begin{itemize}
    \item \textbf{PT} is run for 400 epochs with batch size 64, learning rate \(2\times10^{-5}\), gradient clipping 0.1, and weight decay \(10^{-12}\).
    \item \textbf{SFT} is run for up to 3,000 epochs with batch size 64 and learning rate \(2\times10^{-5}\).
    Checkpoints are selected by validation controllability averaged over tasks.
    \item \textbf{RL} is run for up to 400 epochs with batch size 32, learning rate \(1\times10^{-5}\), PPO clip range 0.2, value loss coefficient 0.5, entropy coefficient 0.001, KL coefficient 0.01, one rollout collection pass, and two PPO update passes.
    For computational efficiency, we apply partial-suffix fine-tuning to the final 30 reverse denoising steps, following~\citep{ren2024dppo}.
    For regression tasks, the property reward uses a Gaussian target-alignment term with \(\sigma=0.5\) after target normalization.
    For classification tasks, the property reward is the evaluator-predicted probability assigned to the target class.
    The terminal reward combines the property term with the validity term using \(w_{\mathrm{val}}=0.1\).
\end{itemize}

For baselines, we follow the official hyperparameter settings whenever available.
When official settings are unavailable or incomplete, we use Optuna~\citep{akiba2019optuna} to tune hyperparameters on the validation set and report test performance using the selected configuration.
All methods are evaluated on the same data splits, target conditions, and property evaluators.

\subsection{Computational Cost and Execution Time}
\label{app:runtime}

We report the wall-clock runtime and peak memory usage of the main training stages in \cref{tab:runtime}.
All runs are conducted on a single NVIDIA A100 GPU.
Peak CPU memory refers to host-side process memory, while peak GPU memory refers to allocated device memory.
The polymer pretraining run is shared by the two polymer benchmarks, namely Polymer gas and Polymer DFT, whereas the molecule pretraining run is used for the Drug benchmark.

The reported runtime and memory usage characterize the practical computational cost of training \MODEL under our experimental setting.
These measurements may vary with hardware configuration, software environment, I/O overhead, and cluster scheduling conditions.
\begin{table}[h]
\centering
\caption{Runtime and peak memory usage of the main training stages on a single NVIDIA A100 GPU.}
\label{tab:runtime}
\resizebox{0.9\textwidth}{!}{%
\begin{tabular}{lllll}
\toprule
\textbf{Stage} & \textbf{Data Type} & \textbf{Peak CPU Memory} & \textbf{Peak GPU Memory} & \textbf{Execution time} \\
\midrule
Pretraining & Polymer  & $\sim$5.8 GB & $\sim$12.9 GB & 3h 17m 20s \\
Pretraining & Molecule & $\sim$5.8 GB & $\sim$10.3 GB & 1h 45m 33s \\
\midrule
SFT & Polymer gas      & $\sim$8.0 GB & $\sim$19.4 GB & 6h 55m 45s \\
SFT & Polymer DFT      & $\sim$6.7 GB & $\sim$10.0 GB & 5h 35m 34s \\
SFT & Drug             & $\sim$6.1 GB & $\sim$10.3 GB & 4h 25m 1s \\
\midrule
RL post-training & Polymer gas & $\sim$6.0 GB & $\sim$7.4 GB & 2d 3h 49m \\
RL post-training & Polymer DFT & $\sim$6.0 GB & $\sim$7.5 GB & 1d 18h 41m \\
RL post-training & Drug        & $\sim$6.3 GB & $\sim$7.4 GB & 2d 7h 6m \\
\bottomrule
\end{tabular}
}
\end{table}

\subsection{Training Dynamics}
\label{app:training_dynamics}

We visualize validation dynamics during SFT and RL post-training in \cref{fig:dynamics_sft,fig:dynamics_rl}. 
For the drug benchmark, Avg.\ MAE is computed as the macro-average over FreeSolv, BACE, and BBBP, giving each task equal weight.
For categorical tasks such as BACE and BBBP, MAE is defined as the average absolute difference between the predicted positive-class probability and the binary target label.

During SFT, all three benchmarks exhibit stable training behavior.
Validity increases rapidly and remains high throughout training, while fragment similarity stays close to the reference distribution.
FCD decreases sharply in the early stage for the gas-permeability benchmark and remains relatively stable for the DFT and drug benchmarks.
Avg.\ MAE decreases substantially for the DFT and drug benchmarks and improves overall for the gas-permeability benchmark, despite moderate fluctuations.
These results indicate that SFT improves controllability while preserving the structural prior learned during pretraining.

RL post-training shows benchmark-dependent behavior. 
On the two polymer benchmarks, RL further improves Avg.\ MAE while largely maintaining distribution fidelity.
The gas-permeability benchmark preserves high validity and stable fragment similarity, whereas the DFT benchmark improves controllability with moderate fluctuations in validity and distribution metrics.

In contrast, the drug benchmark exhibits a stronger distribution shift during RL.
Although Avg.\ MAE decreases and validity remains high, FCD increases substantially and fragment similarity drops.
This trend is consistent with the heterogeneous nature of the drug benchmark, where FreeSolv, BACE, and BBBP span physical chemistry, target-specific bioactivity, and physiological permeability, respectively~\citep{wu2018moleculenet}.
Compared with a benchmark defined by a single drug-property family, this setting introduces more diverse target-alignment pressures, so RL improves controllability while shifting generation away from the aggregate reference distribution.
\begin{figure}[t]
\centering
\includegraphics[width=\linewidth]{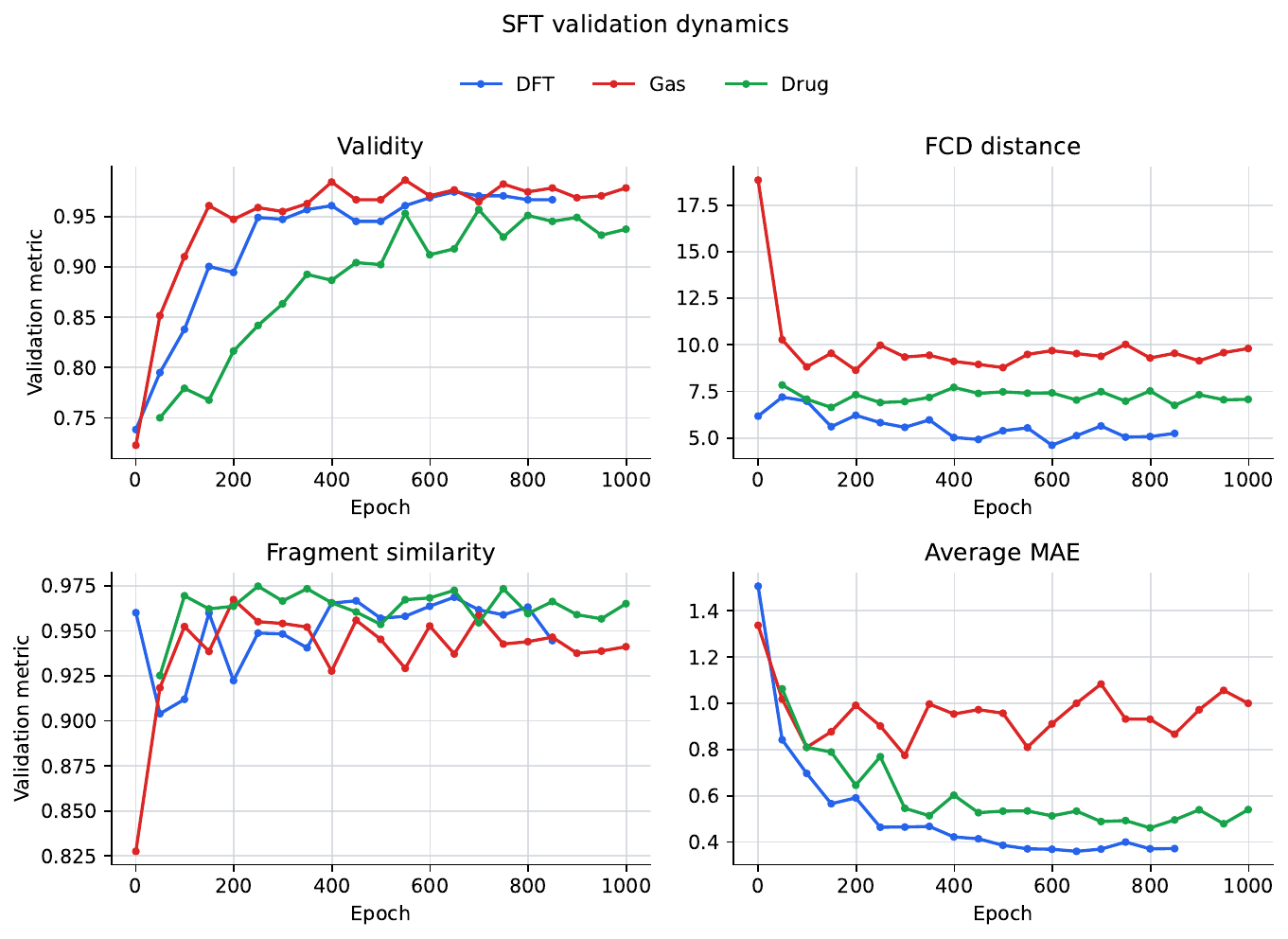}
\caption{
SFT validation dynamics across the polymer DFT, polymer gas-permeability, and drug benchmarks.
Validity denotes raw validity without rule-based checking.
SFT steadily improves controllability while maintaining high distribution fidelity.
}
\label{fig:dynamics_sft}
\end{figure}

\begin{figure}[t]
\centering
\includegraphics[width=\linewidth]{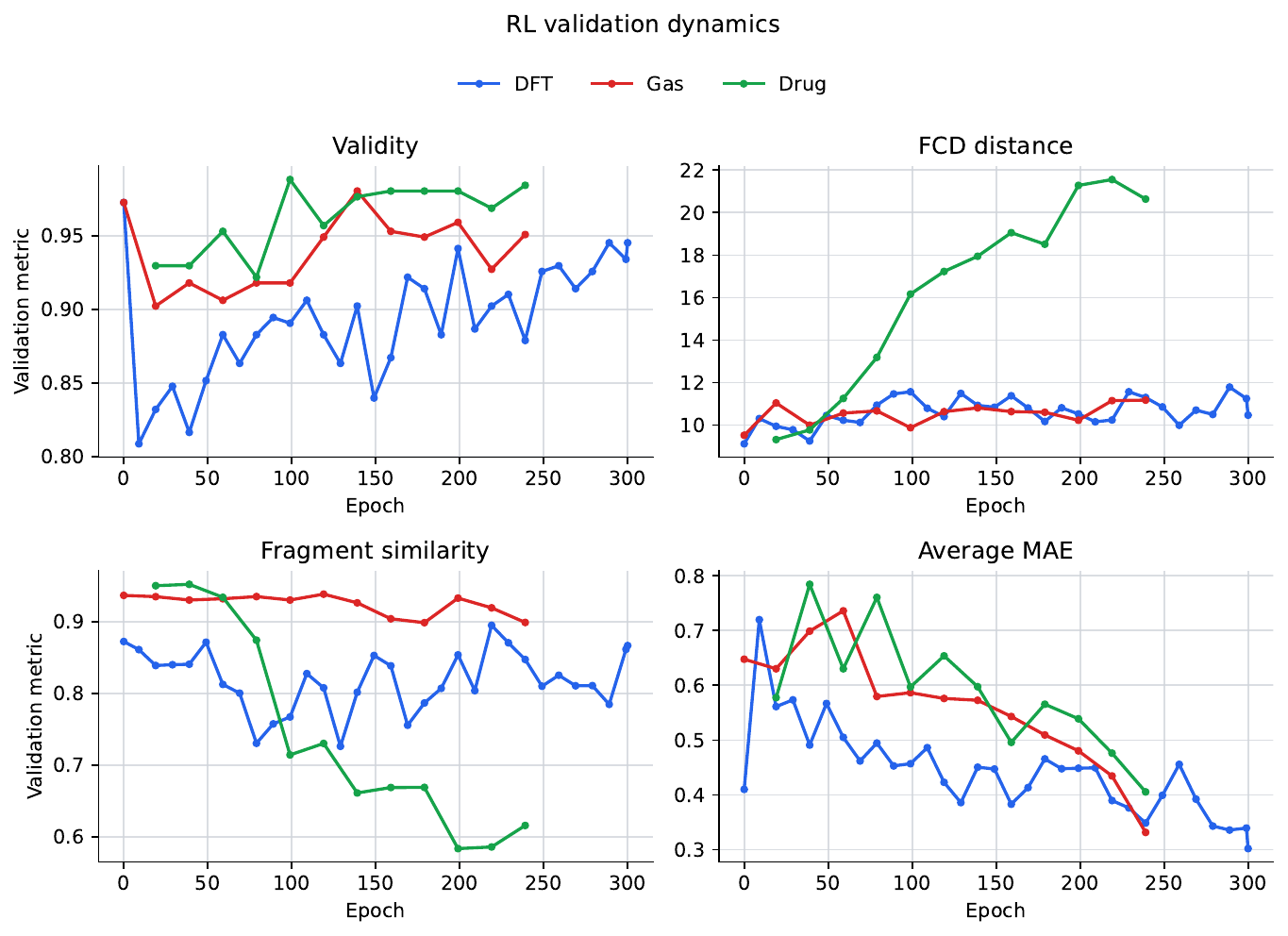}
\caption{
RL validation dynamics across the polymer DFT, polymer gas-permeability, and drug benchmarks. 
Validity denotes raw validity without rule-based checking.
RL further improves target controllability. 
The polymer benchmarks retain relatively stable distribution metrics, whereas the drug benchmark exhibits a stronger shift toward target-aligned regions.
}
\label{fig:dynamics_rl}
\end{figure}

\subsection{More Results}
\label{app:more_results}
In the following results, we report validity measured without rule-based 
checking, denoted as Validity$_{\mathrm{w/o\ check}}$.

\subsubsection{Joint Training on Polymer Properties}
\label{app:polymer_joint_training}
We report gas permeability and DFT properties as separate polymer task sets in the main experiments because they correspond to distinct property families and empirical distributions.
To examine whether they can be merged into a single heterogeneous polymer setting, we compare two training protocols:
\begin{itemize}
    \item \emph{Separated}: the main experimental setting, where gas-permeability and DFT targets are trained and evaluated using two benchmark-specific models.
    \item \emph{Joint}: a single model trained jointly on all six polymer targets, including O$_2$Perm, N$_2$Perm, CO$_2$Perm, Eea, Egb, and Egc.
\end{itemize}
For the separated setting, target MAEs are taken from the corresponding gas and DFT benchmark runs.
Validity$_{\mathrm{w/o\ check}}$ and distribution metrics are reported as the arithmetic mean of the two benchmark-level results, while Avg. MAE is the simple average over all six target-level MAEs.

As shown in \cref{tab:polymer_joint_training}, joint training does not cause a general collapse in validity or distribution learning.
However, merging the two polymer property families weakens target control.
At the SFT stage, the six-target Avg. MAE increases from 0.5400 in the separated setting to 0.5786 under joint training.
After RL, the degradation becomes more pronounced: Avg. MAE increases from 0.3713 to 0.5916.
This drop is mainly driven by gas-permeability targets, whose average MAE increases from 0.4315 to 0.8510, whereas DFT targets are less affected, changing from 0.3111 to 0.3322.
This suggests that gas-permeability targets introduce stronger distributional heterogeneity into the six-target setting.

These results indicate that merging heterogeneous polymer property families can introduce negative transfer in target controllability, even when validity and distribution-level metrics remain competitive.
We therefore keep gas-permeability and DFT properties as separate polymer task sets in the main evaluation.

\begin{table*}[t]
\centering
\small
\setlength{\tabcolsep}{4pt}
\caption{
Joint training analysis on polymer tasks.
\emph{Separated} denotes the main setting where gas permeability and DFT properties are trained as two benchmark-specific models.
\emph{Joint} denotes a single model trained jointly on all six polymer targets.
For \emph{Separated}, validity and distribution metrics are arithmetic means of the two benchmark-level results, while Avg. MAE is the simple average over the six target-level MAEs.
Best results are in \textbf{bold}.
}
\label{tab:polymer_joint_training}
\resizebox{\textwidth}{!}{%
\begin{tabular}{lccccccccccc}
\toprule
Model
& Validity$_{\mathrm{w/o\ check}}$ $\uparrow$
& Diversity $\uparrow$
& Similarity $\uparrow$
& Distance $\downarrow$
& O$_2$Perm $\downarrow$
& N$_2$Perm $\downarrow$
& CO$_2$Perm $\downarrow$
& Eea $\downarrow$
& Egb $\downarrow$
& Egc $\downarrow$
& Avg. MAE $\downarrow$ \\
\midrule
\MODELSFT{}-Joint
& 0.9609 & 0.8911 & \textbf{0.9743} & \textbf{3.8751}
& 0.6906 & 0.8394 & 0.7580
& 0.3525 & 0.4787 & 0.3522
& 0.5786 \\
\MODELSFT{}-Separated
& \textbf{0.9695} & 0.8712 & 0.9658 & 7.3101
& 0.6442 & 0.7888 & 0.6709
& 0.2874 & 0.4713 & 0.3772
& 0.5400 \\
\midrule
\MODEL{}-Joint
& 0.9277 & \textbf{0.8923} & 0.9524 & 4.6167
& 0.8245 & 1.0396 & 0.6889
& 0.2668 & 0.3522 & 0.3776
& 0.5916 \\
\MODEL{}-Separated
& 0.9457 & 0.8773 & 0.8966 & 9.6612
& \textbf{0.3975} & \textbf{0.4720} & \textbf{0.4250}
& \textbf{0.2392} & \textbf{0.4129} & \textbf{0.2813}
& \textbf{0.3713} \\
\bottomrule
\end{tabular}%
}
\end{table*}

\subsubsection{Other RL Post-training Objective}
\label{app:alternative_rl_objective}

Our main experiments instantiate RL post-training with PPO.
To test whether our pipeline is specific to PPO, we additionally evaluate a GDPO~\citep{liu2024gdpo} policy optimization objective on the gas-permeability polymer benchmark.
We denote this variant as \MODEL{}$_{\text{w/ GDPO}}$, which keeps the motif tokenizer, SFT checkpoint, terminal reward, and evaluation protocol unchanged.

As shown in \cref{tab:alternative_rl_objective_gas}, the GDPO update is compatible with our RL pipeline and improves target control over the SFT checkpoint.
Compared with \MODELSFT{}, \MODEL{}$_{\text{w/ GDPO}}$ reduces Avg. MAE from 0.7013 to 0.5879 while maintaining high raw validity.
It also substantially outperforms the original atom-level GDPO baseline in validity and distribution similarity, highlighting the benefit of motif-aware actions.

However, PPO remains stronger in this setting.
These results suggest that our pipeline is not tied to a specific RL algorithm, while PPO provides an effective instantiation for the main experiments.
\begin{table}[t]
\centering
\small
\setlength{\tabcolsep}{5pt}
\caption{
Alternative RL algorithm on the gas-permeability benchmark.
\MODEL{}$_{\text{w/ GDPO}}$ replaces PPO in \MODEL with a GDPO-style policy update, while keeping all other settings unchanged.
Best results are in \textbf{bold}.
}
\label{tab:alternative_rl_objective_gas}
\resizebox{\linewidth}{!}{%
\begin{tabular}{lcccccccc}
\toprule
Model
& Validity$_{\mathrm{w/o\ check}}$ $\uparrow$
& Diversity $\uparrow$
& Similarity $\uparrow$
& Distance $\downarrow$
& O$_2$Perm $\downarrow$
& N$_2$Perm $\downarrow$
& CO$_2$Perm $\downarrow$
& Avg. MAE $\downarrow$ \\
\midrule
GDPO                 
& 0.2965
& \textbf{0.8766}
& 0.0973 
& 28.4533 
& 0.7934 
& 0.8650 
& 0.7527 
& 0.8037 \\
\midrule
\MODELSFT{}      
& \textbf{0.9844}
& 0.8564 
& \textbf{0.9630}  
& \textbf{9.4849}
& 0.6442 
& 0.7888 
& 0.6709 
& 0.7013 \\
\MODEL{}$_{\text{w/ GDPO}}$
& 0.9286
& 0.7698 
& 0.8423 
& 13.9192
& 0.5842
& 0.6751
& 0.5044 
& 0.5879 \\
\MODEL{}          
& 0.9500 
& 0.8644 
& 0.9405 
& 9.9854 
& \textbf{0.3975} 
& \textbf{0.4720} 
& \textbf{0.4250} 
& \textbf{0.4315} \\
\bottomrule
\end{tabular}%
}

\end{table}

\subsubsection{Further Analysis of \MODELSFT{}}
\label{app:experimental_sft_readiness}

In this section, we analyze whether the SFT reference policy \(\pi_{\mathrm{ref}}\) mentioned in \cref{sec:theory_rl_improves} already assigns trajectory-level probability to low-error generations before RL.
Using the \MODELSFT{} trained on the polymer DFT benchmark, for each test condition \(c\), we draw \(K\) independent samples and compute the absolute target error under that condition.
We report two metrics:
\begin{itemize}
    \item \(\mathrm{mean\text{-}MAE}@K\) averages the errors over all sampled molecules and conditions, reflecting standard sampling quality.
    \item \(\mathrm{best\text{-}MAE}@K\) first selects, for each condition, the lowest-error sample among the \(K\) draws, and then averages this best-sample error over conditions.
\end{itemize}

As shown in \cref{tab:sft_readiness}, \(\mathrm{best\text{-}MAE}@K\) decreases from 0.3676 at \(K=1\) to 0.0261 at \(K=32\), while \(\mathrm{mean\text{-}MAE}@K\) remains relatively stable.
This suggests that the SFT reference policy already covers low-error candidates, but standard sampling does not select them reliably.
RL can therefore improve controllability by increasing the probability of these target-aligned generations.

\begin{table}[t]
\centering
\small
\setlength{\tabcolsep}{4pt}
\renewcommand{\arraystretch}{1.05}
\caption{Repeated sampling from \MODELSFT{} on the polymer DFT benchmark, with exactly $K$ draws per test condition.}
\label{tab:sft_readiness}
\resizebox{0.5\linewidth}{!}{%
\begin{tabular}{cccc}
\toprule
$K$ & Validity$_{\mathrm{w/o\ check}}$ $\uparrow$ & mean-MAE@K $\downarrow$ & best-MAE@K $\downarrow$ \\
\midrule
1  & 0.9815 & 0.3676 & 0.3676 \\
4  & 0.9647 & 0.3743 & 0.0808 \\
8  & 0.9641 & 0.3739 & 0.0476 \\
16 & 0.9676 & 0.3725 & 0.0322 \\
32 & 0.9671 & 0.3766 & \textbf{0.0261} \\
\bottomrule
\end{tabular}
}
\end{table}

\subsubsection{Oracle Robustness}
\label{app:experimental_oracle_selections}

To validate whether our conclusions depend on the property oracle used for reward and evaluation, we perform an oracle robustness analysis.
During RL post-training, the property component of the terminal reward is computed by a learned oracle, so a model could in principle overfit to the biases of a specific evaluator.

We therefore freeze the generated candidates from the main experiments and re-score them using alternative predictors trained on the same training splits, including Random Forest~\citep{randomforest}, Support Vector Machine, and GRIN~\citep{grin2025}.
For each oracle, we compute the corresponding controllability metric for every target, rank models by target-level performance, and report the average rank across the nine evaluated properties in \cref{tab:oracle_generation_evaluation}.

Although the ordering of some baselines varies across oracles, \MODEL consistently ranks first under all three evaluators. 
This suggests that our main conclusion is robust to the choice of learned oracle and is not driven by a specific evaluator. 

In the main experiments, we use the best-performing validation oracle for each domain: Random Forest for the drug-related molecular task set and GRIN for the polymer task sets.
Since GRIN can be trained as a general graph predictor, its main advantage comes from repetition-aware augmentation for polymer repeat-unit representations, which provides limited benefit for small molecules.

\begin{table}[t]
\centering
\small
\setlength{\tabcolsep}{6pt}
\caption{
Oracle robustness for generation evaluation.
Models are sorted by their average target-level rank across the nine evaluated properties, and only ranks 1--9 are shown here.
The ordering of baselines varies across evaluators, but \MODEL consistently ranks first.
}
\label{tab:oracle_generation_evaluation}
\begin{tabular}{clll}
\toprule
Avg. Rank & Random Forest & Support Vector Machine & GRIN\\
\midrule
1 & \textbf{\MODEL}     & \textbf{\MODEL}    & \textbf{\MODEL}\\
2 & Graph DiT  & FREED     & GDPO \\
3 & GDPO       & GDPO      & Graph DiT\\
4 & DiGress v2 & Graph DiT & DeFoG \\
5 & FREED      & LSTM-HC   & DiGress v2 \\
6 & DiGress    & JTVAE-BO  & FREED\\
7 & LSTM-HC    & DiGress   & DiGress\\
8 & MARS       & DiGress v2& LSTM-HC\\
9 & DeFoG      & DeFoG     & JTVAE-BO\\
\bottomrule
\end{tabular}
\end{table}

\subsubsection{Novelty and Uniqueness}
\label{app:exp_novelty_uniqueness}

We additionally report novelty and uniqueness as complementary generation-quality metrics.
These metrics are computed over generated samples aggregated across the evaluated target conditions.
Novelty measures the fraction of valid generated molecules absent from the training set, while uniqueness measures the fraction of non-duplicated valid generations after canonicalization.

For conditional generation, however, these metrics are mainly sanity checks against memorization and duplication.
They do not measure whether generated molecules satisfy the desired target properties or lie in a practically useful design region.
Prior work has shown that simple perturbations such as AddCarbon can achieve near-perfect scores without producing useful candidates~\citep{failuremode,tripp2023geneticalgorithmsstrongbaselines}.
Therefore, we treat novelty and uniqueness as complementary rather than primary indicators of inverse-design quality in our main experiments.

As shown in Table~\ref{tab:novelty_uniqueness}, \MODELSFT{} and \MODEL retain novelty and uniqueness, suggesting that \MODEL improves target controllability without collapsing to memorized or highly duplicated generations.

\begin{table}[t]
\centering
\caption{
Complementary novelty and uniqueness metrics across evaluated conditions.
Novelty and uniqueness are computed over valid generated molecules after canonicalization.
}
\label{tab:novelty_uniqueness}
\begin{tabular}{lcc}
\toprule
\textbf{Method} & \textbf{Novelty} & \textbf{Uniqueness} \\
\midrule
Graph GA   & 0.9950 & 1.0000 \\
MARS       & 1.0000 & 0.7500 \\
LSTM-HC    & 0.9421 & 0.9535 \\
JTVAE-BO   & 1.0000 & 0.6847 \\
DiGress    & 0.9908 & 0.7811 \\
DiGress v2 & 0.9801 & 0.7720 \\
GDSS       & 0.9136 & 0.2932 \\
MOOD       & 0.9867 & 0.9761 \\
DeFoG      & 0.9548 & 0.8819 \\
Graph DiT  & 0.9139 & 0.8739 \\
FREED      & 1.0000 & 1.0000 \\
GDPO       & 1.0000 & 0.9577 \\
\midrule
\MODELSFT{} & 0.9026 & 0.8817 \\
\MODEL      & 0.9390 & 0.8476 \\
\bottomrule
\end{tabular}
\end{table}

\subsection{Visualization}
Given the quantitative results in \cref{tab:main_results_gas,tab:main_results_dft}, we further visualize whether low-sample target-conditioned generation produces chemically plausible structures.
For selected O$_2$Perm, CO$_2$Perm, Eea, and Egb test conditions, we sample 10 candidates from \MODEL and show the rank-1 generated structure.
The rank is computed by averaging the candidate's rank in target-value error and its rank in structural similarity to the corresponding held-out reference structure.
The reference structure is used only for qualitative comparison and is not treated as the unique solution to the inverse-design problem.

As shown in \cref{fig:visualization_dft,fig:visualization_gas}, the selected generated structures often retain chemically meaningful motifs or scaffold patterns related to the held-out references, such as thiophene-containing units for Eea and Egb, aliphatic ester repeat units for high-Egb, aromatic imide motifs for CO$_2$Perm, and aromatic sulfone or bulky aromatic ester scaffolds for O$_2$Perm.
These examples qualitatively complement the quantitative results by showing that target-conditioned samples can remain close to plausible structure chemistry under limited sampling.

\begin{figure}[t]
    \centering
    \includegraphics[width=0.9\textwidth]{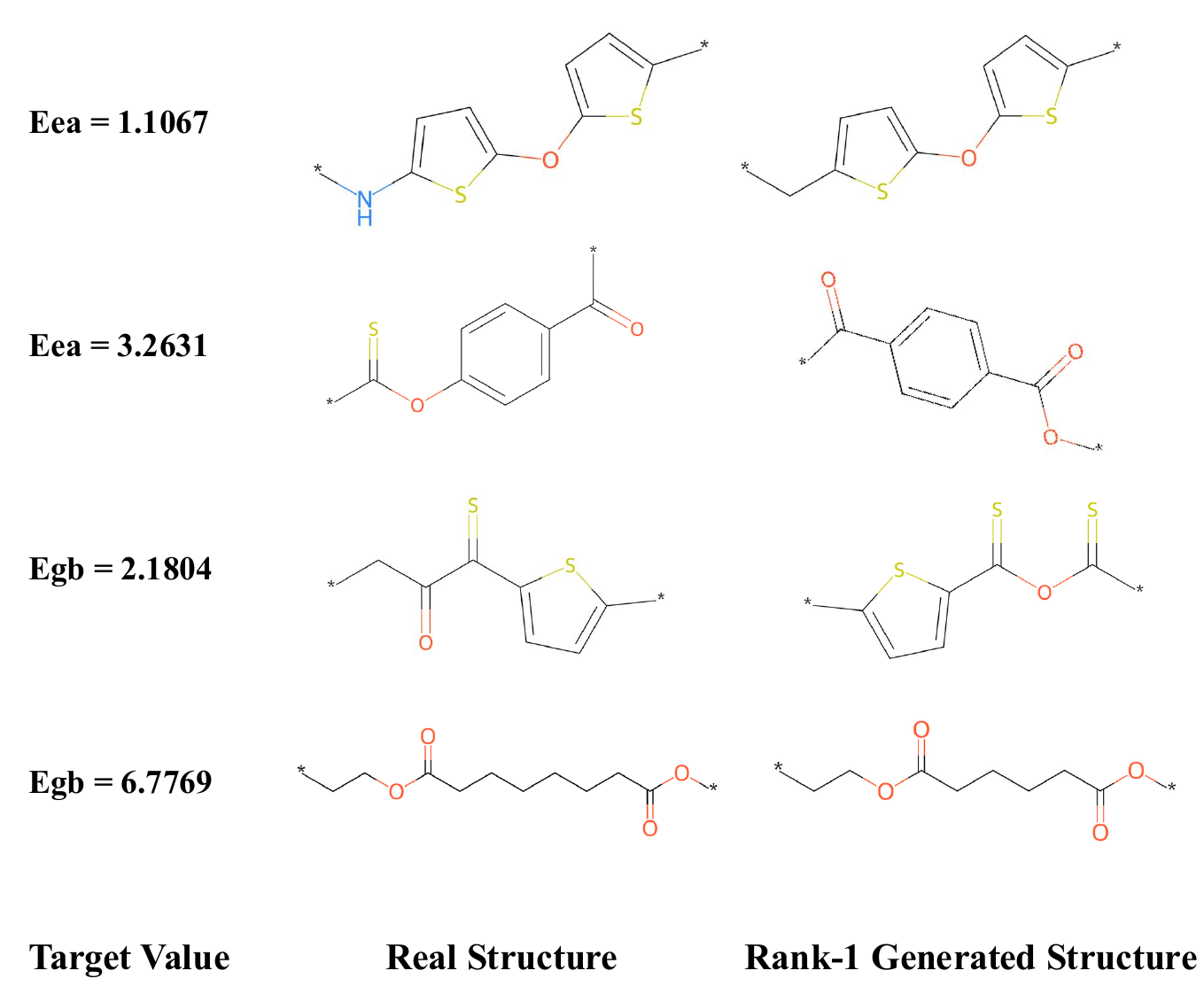}
    \caption{
    Rank-1 generated structures selected from 10 generated samples separately for Eea and Egb conditions.
    }
    \label{fig:visualization_dft}
\end{figure}
\begin{figure}[t]
    \centering
    \includegraphics[width=0.9\textwidth]{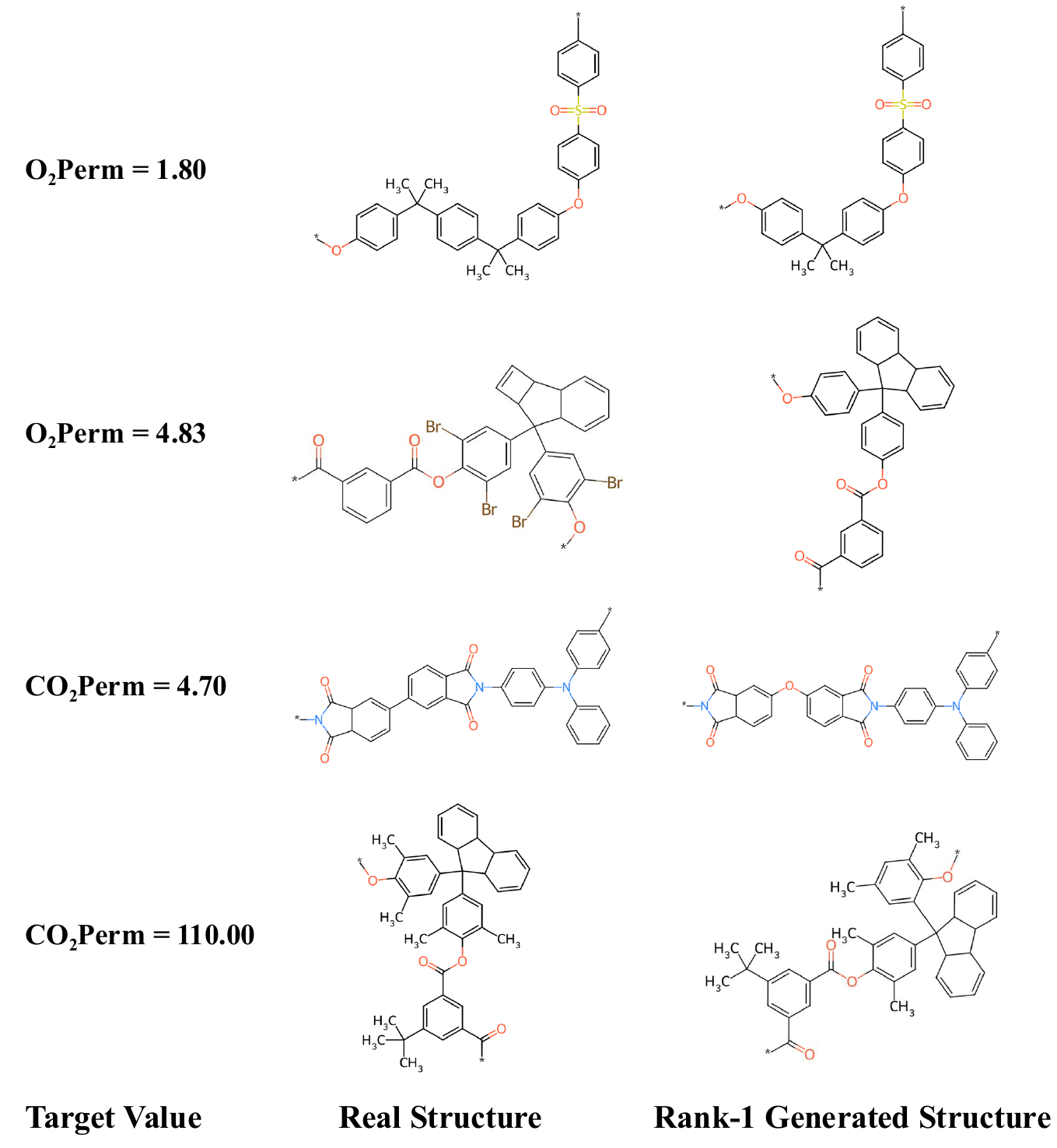}
    \caption{
    Rank-1 generated structures selected from 10 generated samples separately for O$_2$Perm and CO$_2$Perm conditions.
    }
    \label{fig:visualization_gas}
\end{figure}

\newpage
\section{Limitations and Future Directions}
\label{app:limitations}
\paragraph{Limitations}
First, our evaluation covers representative but not exhaustive chemical design settings.
For polymers, we focus on gas-permeability and DFT-derived electronic properties, while other practically important targets, such as thermal properties ($T_g$ and $T_c$) and mechanical properties, remain unexplored.
For small molecules, our benchmark includes regression and binary classification tasks, but does not yet cover broader condition types such as multi-class labels.
Expanding the task coverage would provide a more comprehensive assessment of heterogeneous controllable generation.

Second, we adopt a largely shared training configuration across benchmarks.
For example, \MODEL uses the same configuration for the polymer and molecule tokenizers, as described in \cref{app:motif_tokenizer}.
Although the V1000-R80 tokenizer provides strong compression for polymers, yielding an $82.1\%$ node reduction, the compression is less pronounced for small molecules.
In particular, the current tokenizer reduces the RL decision space and credit-assignment burden more substantially for polymers than for small molecules.

\paragraph{Future Directions}
Future work could develop stronger and more adaptive tokenizers to better balance compression, expressiveness, and controllability.
Promising directions include larger molecular vocabularies and unified tokenizers for polymers and small molecules that are less tied to a specific training corpus.

Another important direction is simultaneous multi-property control, where generated molecules or polymers must satisfy multiple numerical or categorical conditions at once.
This setting better reflects practical scientific design, where candidates are typically selected under coupled requirements involving property targets, chemical validity, synthesizability, and distributional plausibility.

More broadly, the long-term goal of controllable molecular generation is to advance scientific discovery by bridging the gap between hypothetical structure generation and practical inverse design.
Such models can help researchers prioritize more promising molecular candidates under heterogeneous property constraints before committing to expensive simulations, experiments, or screening cycles.
By combining transferable structural priors with task-conditioned adaptation and reward-guided post-training, \MODEL provides a step toward foundation-model-based design systems for molecules and polymers.



\end{document}